\documentclass[journal,12pt,onecolumn] {IEEEtran}

\ifCLASSINFOpdf
\usepackage[pdftex]{graphicx}
\DeclareGraphicsExtensions{.pdf,.jpeg,.png}
\else
\usepackage[dvips]{graphicx}
\DeclareGraphicsExtensions{.eps}
\fi

\usepackage[cmex10]{amsmath}
\usepackage[caption=false,font=footnotesize]{subfig}
\usepackage{amssymb}
\usepackage{amsthm}
\usepackage{gensymb}
\usepackage{flushend}

\newtheorem{thm}{Theorem}
\newtheorem{cor}{Corollary}
\newtheorem{lem}{Lemma}
\newtheorem{defn}{Definition}

\begin{document}
	
\title{Optimal Clustering under Uncertainty}
\author{Lori~A.~Dalton, Marco~E.~Benalc\'{a}zar, and~Edward~R.~Dougherty
	
\thanks{L.~A.~Dalton is with the Department of Electrical and Computer Engineering, The Ohio State University, Columbus, OH 43210 USA (e-mail: dalton.lori@outlook.com).} 
\thanks{M.~E.~Benalc\'{a}zar is with the Secretar\'{\i}a Nacional de Educaci\'{o}n Superior, Ciencia, Tecnolog\'{\i}a e Innovaci\'{o}n (SENESCYT), Ecuador, the Consejo Nacional de Investigaciones Cient\'{\i}ficas y T\'{e}cnicas (CONICET), Argentina, the Facultad de Ingenier\'{\i}a, Universidad Nacional de Mar del Plata, Mar del Plata, Argentina, and the Escuela Polit\'{e}cnica Nacional, Departamento de Inform\'{a}tica y Ciencias de la Computaci\'{o}n, Quito, Ecuador (e-mail: marco\_benalcazar@hotmail.com).} 
\thanks{E.~R.~Dougherty is with the Department of Electrical and Computer Engineering, Texas A\&M University, College Station, TX 77843 USA (e-mail: edward@ece.tamu.edu).}}

\maketitle
		
\begin{abstract}
	Classical clustering algorithms typically either lack an underlying probability framework to make them predictive or focus on parameter estimation rather than defining and minimizing a notion of error. Recent work addresses these issues by developing a probabilistic framework based on the theory of random labeled point processes and characterizing a \emph{Bayes clusterer} that minimizes the number of misclustered points. The Bayes clusterer is analogous to the Bayes classifier. Whereas determining a Bayes classifier requires full knowledge of the feature-label distribution, deriving a Bayes clusterer requires full knowledge of the point process. When uncertain of the point process, one would like to find a robust clusterer that is optimal over the uncertainty, just as one may find optimal robust classifiers with uncertain feature-label distributions. Herein, we derive an optimal robust clusterer by first finding an \emph{effective} random point process that incorporates all randomness within its own probabilistic structure and from which a Bayes clusterer can be derived that provides an optimal robust clusterer relative to the uncertainty. This is analogous to the use of effective class-conditional distributions in robust classification. After evaluating the performance of robust clusterers in synthetic mixtures of Gaussians models, we apply the framework to granular imaging, where we make use of the asymptotic granulometric moment theory for granular images to relate robust clustering theory to the application.
\end{abstract}
		
\begin{IEEEkeywords}
	Clustering, Bayesian robustness; granular imaging; small samples.
\end{IEEEkeywords}

\section{Introduction}

The basic optimization paradigm for operator design consists of four parts:
(1) define the underlying random process; (2) define the class of potential
operators; (3) characterize operator performance via a cost function; and
(4) find an operator to minimize the cost function. The classic example is
the Wiener filter, where the four parts consist of wide-sense stationary
true and observed signals, linear operators, minimization of the mean-square
error, and optimization in terms of power spectra. In practice, we might be
uncertain as to the distribution governing the random process so that we
desire a \emph{robust} operator, one whose performance is acceptable
relative to the uncertainty. Robust design can be posed in the following
way: Given a class of operators and given that the \emph{state of nature} is
uncertain but contained in some \emph{uncertainty class}, which operator
should be selected to optimize performance across all possible states of
nature?

Our interest here is in clustering, where the underlying process is a random
point set and the aim is to partition the point set into clusters
corresponding to the manner in which the points have been generated by the
underlying process. Having developed the theory of optimal clustering in the
context of random labeled point sets where optimality is with respect to
mis-clustered points~\cite{dalton2015analytic}, we now consider optimal
clustering when the underlying random labeled point process belongs to an
uncertainty class of random labeled point processes, so that optimization is
relative to both clustering error and model uncertainty. This is analogous
to finding an optimal Wiener filter when the signal process is unknown, so
that the power spectra belong to an uncertainty class~\cite%
{dalton2014intrinsically}. We now briefly review classical robust operator
theory, which will serve as the foundation for a new general theory of
optimal robust clustering.

Optimal robust filtering first appeared in signal processing in the 1970s
when the problem was addressed for signals with uncertain power spectra.
Early work considered robust filter design from a minimax perspective: the
filter is designed for the state having the best worst-case performance over
all states \cite{Kuznetsov:76,Kassam:77,Poor:80}. Whereas the standard
optimization problem given certainty with regard to the random process takes
the form 
\begin{equation}
\psi ^{\ast }=\arg \min_{\psi \in \mathcal{C}}\gamma (\psi ),
\label{classical}
\end{equation}%
where $\mathcal{C}$ is the operator class and $\gamma (\psi )$ is the cost
of applying operator $\psi $ on the model, minimax optimization is defined
by 
\begin{equation}
\psi _{\mathrm{MM}}=\arg \min_{\psi \in \mathcal{C}_{\Theta }}\max_{\theta
\in \Theta }\gamma _{\theta }(\psi ),  \label{MC_minimax}
\end{equation}%
where $\Theta $ is the uncertainty class of random processes, $\mathcal{C}%
_{\Theta }$ is the class of operators that are optimal for some state in the
uncertainty class, and $\gamma _{\theta }(\psi )$ is the cost of applying
operator $\psi $ for state $\theta \in \Theta $.

Suppose one has prior knowledge with which to construct a prior distribution 
$\pi (\theta )$ on states (models) in the uncertainty class. Rather than
apply a minimax robust operator, whose average performance can be poor, a
Bayesian approach can be taken whereby optimization is relative to $\pi
(\theta )$. A \textit{model-constrained
(state-constrained) Bayesian robust (MCBR) operator }minimizes the expected
error over the uncertainty class among all operators in $\mathcal{C}_{\Theta
}$: 
\begin{equation}
\psi _{\mathrm{MCBR}}=\arg \min_{\psi \in \mathcal{C}_{\Theta }}E_{\theta
}[\gamma _{\theta }(\psi )].  \label{MCBR}
\end{equation}%
MCBR filtering has been considered for morphological,
binary and linear filtering. MCBR
design has also been applied in classification with uncertain feature-label
distributions~\cite{dougherty2005optimal}.

Rather than restrict optimization to operators that are optimal for some
state in the uncertainty class, one can optimize over any class of
operators, including unconstrained optimization over all possible
measurable functions. In this case, the optimal operator is called an \emph{%
intrinsically optimal Bayesian robust (IBR) operator (filter)} and~%
\eqref{MCBR} becomes 
\begin{equation}
\psi _{\mathrm{IBR}}=\arg \min_{\psi \in \mathcal{C}}E_{\theta }[\gamma
_{\theta }(\psi )],  \label{IBR}
\end{equation}%
where $\mathcal{C}$ is a set of operators under consideration. IBR filtering
has been considered for linear and morphological filtering \cite%
{dalton2014intrinsically}. The IBR approach was first used to design optimal
classifiers when the unknown true feature-label distribution belongs to an
uncertainty class~\cite{dalton2013optimalI,dalton2013optimalII}. In that
setting, optimization is relative to a posterior distribution obtained from
the prior utilizing sample data and an optimal classifier is called an \emph{%
optimal Bayesian classifier (OBC)}.

Unlike the state of affairs in filtering and classification, classical
clustering algorithms typically lack an underlying probability framework to
make them predictive. The exceptions, for instance, expectation-maximization
based on mixture models, typically focus on parameter estimation rather than
defining and minimizing a notion of operator error. Work in~\cite%
{DoughBrun:04} and~\cite{dalton2015analytic} addresses the solution to~%
\eqref{classical} in the context of clustering using a probabilistic theory
of clustering for random labeled point sets and a definition of clustering
error given by the expected number of \textquotedblleft
misclustered\textquotedblright\ points. This results in a \emph{Bayes
clusterer}, which minimizes error under the assumed probabilistic framework.
An (optimal) Bayes clusterer is analogous to an (optimal) \emph{Bayes}
classifier, which minimizes classification error under the assumed
feature-label distribution. Here, we characterize robust clustering using
the framework and definitions of error in~\cite{DoughBrun:04} and~\cite%
{dalton2015analytic}, and introduce definitions of robust clustering that
parallel concepts from filtering. In particular, we present minimax, MCBR
and IBR clusterers, and develop effective stochastic processes for robust
clustering. We also evaluate performance under mixtures of Gaussians and
demonstrate how the methodology can be implemented in practice with an
example from granular imaging.

\section{Bayes Clustering Theory}

In this section, we review Bayes clustering theory from~\cite{DoughBrun:04}
and~\cite{dalton2015analytic}. A \textit{random labeled point process}
(RLPP) is characterized by a pair, $(\Xi ,\Lambda )$, where $\Xi $ is a
point process generating a point set $S\subset \mathbb{R}^{d}$ and $\Lambda $
generates random labels on the points in $S$. In particular, let $\eta (S)$
denote the number of points in $S$. The first component in this pair, $\Xi $%
, maps from a probability space to $(\mathbf{N},\mathcal{N})$, where $%
\mathbf{N}$ is the family of finite sequences in $\mathbb{R}^{d}$ and $%
\mathcal{N}$ is the smallest $\sigma $-algebra on $\mathbf{N}$ such that for
any Borel set $B$ in $\mathbb{R}^{d}$ the mapping $S\mapsto \eta (S\cap B)$
is measurable. A probability measure, $\nu $, of $\Xi $ is determined by the
probabilities $\nu (Y)$ for $Y\in \mathcal{N}$, or (via the
Choquet-Matheron-Kendall theorem~\cite{choquet1954theory,
kendall1974foundations, matheron1975random, chiu2013stochastic}), may be
reduced to the system of probabilities $P(\Xi \cap K\neq \emptyset )$ over
all compact sets $K\subseteq \mathbb{R}^{d}$. Given a point set $S\in 
\mathbf{N}$, a label function $\phi _{S}:S\rightarrow L=\{1,2,\ldots ,l\}$
is a deterministic mapping that assigns each point $\mathbf{x}\in S$ to
label $\phi _{S}(\mathbf{x})\in L$. The second component, $\Lambda $, is a
random labeling, that is, $\Lambda =\{\Phi _{S}:S\in \mathbf{N}\}$, where $%
\Phi _{S}$ is a random label function with probability mass $P(\Phi
_{S}=\phi _{S}|S)$ on $L^{S}$.

For any set $S$, and pair of label functions $\phi _{S}$ and $\varphi _{S}$,
define the \textit{label mismatch error} between $\phi _{S}$ and $\varphi
_{S}$ to be the proportion of points where the label functions differ: 
\begin{equation}
\varepsilon (S,\phi _{S},\varphi _{S})=\frac{1}{\eta (S)}\sum_{\mathbf{x}\in
S}I_{\phi _{S}(\mathbf{x})\neq \varphi _{S}(\mathbf{x})},
\label{eq:label_mismatch_error}
\end{equation}
where $I_{A}$ is an indicator function equal to 1 if $A$ is true and 0
otherwise. Clustering involves identifying partitions of a point set rather
than the actual labeling. A partition of $S$ into $l$ clusters has the form $%
\mathcal{P}_{S}=\{S_{1},S_{2},\ldots ,S_{l}\}$ such that the $S_{y}$ are
disjoint and $S=\bigcup_{y=1}^{l}S_{y}$. Every partition $\mathcal{P}_{S}$
has associated with it a family, $G_{\mathcal{P}_{S}}$, of label functions
that induce the partition $\mathcal{P}_{S}$. That is, $\varphi _{S}\in G_{%
\mathcal{P}_{S}}$ if and only if $\mathcal{P}_{S}=\{S_{1},S_{2},\ldots
,S_{l}\}$ where $S_{y}=\{\mathbf{x}\in S:\varphi _{S}(\mathbf{x})=\ell
_{y}\} $ and $(\ell _{1},\ldots ,\ell _{l})$ is a permutation of $L$. For
any point set $S$, label function $\phi _{S}$, and partition $\mathcal{P}_{S}$%
, define the \textit{cluster mismatch error} to be the minimum label
mismatch error between $\phi _{S}$ and all label functions that induce $%
\mathcal{P}_{S}$: 
\begin{equation}
\varepsilon (S,\phi _{S},\mathcal{P}_{S})=\min_{\varphi _{S}\in G_{\mathcal{P%
}_{S}}}\varepsilon (S,\phi _{S},\varphi _{S}).  \label{eq:mismatch_error}
\end{equation}
This is a simplified version of the original definition in~\cite%
{DoughBrun:04}. Define the \textit{partition error} of $\mathcal{P}_{S}$ to
be the mean cluster mismatch error over the distribution of label functions
on $S$: 
\begin{align}
\varepsilon (S,\mathcal{P}_{S})& =E_{\Phi _{S}}[\varepsilon (S,\Phi _{S},%
\mathcal{P}_{S})|S]  \notag \\
& =E_{\Phi _{S}}\left[ \left. \min_{\varphi _{S}\in G_{\mathcal{P}%
_{S}}}\varepsilon (S,\Phi _{S},\varphi _{S})\right\vert S\right] .
\label{eq:error_of_partition}
\end{align}

In~\cite{dalton2015analytic}, it was shown that~\eqref{eq:error_of_partition}
can be written in the form 
\begin{equation}
\varepsilon (S,\mathcal{P}_{S})=\sum_{\mathcal{Q}_{S}\in \mathcal{K}%
_{S}}c_{S}(\mathcal{P}_{S},\mathcal{Q}_{S})P_{S}(\mathcal{Q}_{S}),
\label{eq:error_of_partition:final}
\end{equation}
where $\mathcal{K}_{S}$ is the set of all partitions of $S$, 
\begin{equation}
P_{S}(\mathcal{Q}_{S})=\sum_{\phi _{S}\in G_{\mathcal{Q}_{S}}}P(\Phi
_{S}=\phi _{S}|S)  \label{eq:partition_probability}
\end{equation}
is the probability mass function on partitions $\mathcal{Q}_{S}\in \mathcal{K%
}_{S}$ of $S$, and we define the \textit{natural partition cost function}, 
\begin{equation}
c_{S}(\mathcal{P}_{S},\mathcal{Q}_{S})=\frac{1}{\eta (S)}\min_{\varphi
_{S}\in G_{\mathcal{P}_{S}},\phi _{S}\in G_{\mathcal{Q}_{S}}}\sum_{\mathbf{x}
\in S}I_{\phi _{S}(\mathbf{x})\neq \varphi _{S}(\mathbf{x})}.
\label{eq:cost_function:final}
\end{equation}
The partition error under the natural cost function is essentially the
average number of misclustered points.

Taking~\eqref{eq:error_of_partition:final} as a generalized definition,
other cost functions can be applied~\cite{binder1978bayesian,
quintana2003bayesian, fritsch2009improved, meilua2007comparing}. The natural
cost function stands out in two respects. First, while these works define
loss over label functions, we define cost directly over partitions, which is
mathematically cleaner, and automatically treats the label switching problem in which multiple distinct label functions may produce the same partitions.
Second, these works treat loss abstractly without connecting to a practical
notion of clustering error, like the expected (minimum) number of mislabeled
points. In contrast, we begin with a practical definition of clustering
error, and show that minimizing clustering error equivalently minimizes~%
\eqref{eq:error_of_partition:final} relative to the natural cost function.

Let $\mathcal{C}_{S}=\{\mathcal{P}_{S}^{1},\ldots ,\mathcal{P}%
_{S}^{c}\}\subseteq \mathcal{K}_{S}$ be a set of $c$ \emph{candidate}
partitions that comprise the search space and $\mathcal{R}_{S}=\{\mathcal{Q}%
_{S}^{1},\ldots ,\mathcal{Q}_{S}^{r}\}\subseteq \mathcal{K}_{S}$ be a set of 
$r$ \emph{reference} partitions with known probabilities. The partition
error of all candidate partitions may be found via 
\begin{align}
& 
\begin{bmatrix}
\varepsilon (S,\mathcal{P}_{S}^{1}) & \cdots & \varepsilon (S,\mathcal{P}%
_{S}^{c})%
\end{bmatrix}%
^{T}  \notag \\
& =%
\begin{bmatrix}
c_{S}(\mathcal{Q}_{S}^{1},\mathcal{P}_{S}^{1}) & \cdots & c_{S}(\mathcal{Q}%
_{S}^{r},\mathcal{P}_{S}^{1}) \\ 
\vdots & \ddots & \vdots \\ 
c_{S}(\mathcal{Q}_{S}^{1},\mathcal{P}_{S}^{c}) & \cdots & c_{S}(\mathcal{Q}%
_{S}^{r},\mathcal{P}_{S}^{c})%
\end{bmatrix}%
\begin{bmatrix}
P_{S}(\mathcal{Q}_{S}^{1}) \\ 
\vdots \\ 
P_{S}(\mathcal{Q}_{S}^{r})%
\end{bmatrix}%
,  \label{eq:error_of_partition:vectors}
\end{align}%
where $T$ denotes matrix transpose. Given $S$, setting $\mathcal{C}_{S}=%
\mathcal{R}_{S}=\mathcal{K}_{S}$ requires a cost matrix of size $|\mathcal{K}%
_{S}|\times |\mathcal{K}_{S}|$, which can be prohibitively large for
moderate $\eta (S)$. To alleviate this, \cite{dalton2015analytic} provides
both exact and approximate techniques to evaluate~%
\eqref{eq:error_of_partition:vectors} under the natural cost function with
reduced complexity.

A cluster operator $\zeta $ maps point sets to partitions. Define the 
\textit{clustering error} of cluster operator $\zeta $ to be the mean
partition error of $\zeta (\Xi )$ over the random point sets $\Xi $: 
\begin{equation*}
\varepsilon (\zeta )=E_{\Xi }[\varepsilon (\Xi ,\zeta (\Xi ))].
\end{equation*}
A \emph{Bayes cluster operator} $\zeta ^{\ast }$ is a clusterer having
minimal clustering error $\varepsilon (\zeta ^{\ast })$, which is called the 
\emph{Bayes clustering error}. Since $\varepsilon (\zeta )=E_{\Xi
}[\varepsilon (\Xi ,\zeta (\Xi ))]$ and $\varepsilon (S,\zeta (S))$ depends
on the clusterer $\zeta $ only at point set $S$, $\varepsilon (\zeta )$ is
minimized by setting $\zeta ^{\ast }(S)=\mathcal{P}_{S}^{\ast }$ for all $%
S\in \mathbf{N}$, where $\mathcal{P}_{S}^{\ast }$ is a \emph{Bayes partition}
of $S$, defined to be a partition having minimal partition error, $%
\varepsilon (S,\mathcal{P}_{S}^{\ast })$, called the \emph{Bayes partition
error}.

This formulation parallels classification theory, where an RLPP corresponds
to a feature-label distribution, $\varepsilon (S,\mathcal{P}_{S})$
corresponds to the probability that a given label is incorrect for a fixed
point in the feature space, $\varepsilon (\zeta )$ corresponds to the
overall classification error for an arbitrary classifier, $\zeta ^{\ast }$
corresponds to a Bayes classifier, and $\varepsilon (\zeta ^{\ast })$
corresponds to the Bayes classification error.

\subsection{Separable RLPPs}

Up to this point, we have characterized RLPPs with a point process $\Xi$
that generates point sets, $S$, followed by an $S$-conditioned labeling
process $\Lambda $ that generates label functions, $\phi _{S}$.
Alternatively, it is often easier to characterize an RLPP as a process that
draws a sample size $n$, a set of labels for $n$ points, and a set of $n$
points with distributions corresponding to the labels. For instance, one
might think of points being drawn from $l$ Gaussian distributions possessing
random parameters. We say that an RLPP is \emph{separable} if a label
function $\phi $ is generated from an independent label generating process $%
\Phi $ with probability mass function $P(\Phi =\phi )$ over the set of all
label functions with domain $\{1,2,\ldots ,n\}$, a random parameter vector $%
\rho $ is independently drawn from a distribution $f(\rho )$, and the $i$th
point $\mathbf{x}_{i}$ in $S$, with corresponding label $y=\phi (i)$, is
independently drawn from a conditional distribution $f(\mathbf{x}|y,\rho )$.
From Bayes rule, the probability of label function $\phi _{S}\in L^{S}$
given $S=\{\mathbf{x}_{1},\ldots ,\mathbf{x}_{n}\}$ is 
\begin{equation}
P(\Phi _{S}=\phi _{S}|S) \propto f(S | \phi) P(\Phi =\phi ),
\label{eq:label_probability_main}
\end{equation}
where $\phi (i)=\phi _{S}(\mathbf{x}_{i})$, 
\begin{equation}
f(S | \phi) = \int \Bigg( \prod_{y=1}^{l}\prod_{\mathbf{x}\in S_{y}}f(%
\mathbf{x}|y,\rho ) \Bigg) f(\rho) d\rho,  \label{eq:label_probability_joint}
\end{equation}
and $S_y = \{\mathbf{x}_i:\phi(i) = y, i = 1, \ldots, n\}$ is the set of
points in $S$ assigned label $y$. A separable RLPP thus has three
components: $P(\Phi = \phi)$, $f(\rho)$ and $f(\mathbf{x} | y, \rho)$, where 
$P(\Phi = \phi)$ is a prior on labels, which is not dependent on $S$, and $%
P(\Phi_S = \phi_S | S)$ is a posterior probability on labels given a
specific point set $S$, which is found using~%
\eqref{eq:label_probability_main} and~\eqref{eq:label_probability_joint}.

If $\rho =[\rho _{1},\ldots ,\rho _{l}]$, where the $\rho _{y}$ are mutually
independent parameter vectors and the label-$y$-conditional distribution
depends on only $\rho _{y}$, that is, if $f(\mathbf{x}|y,\rho )=f(\mathbf{x}%
|y,\rho _{y})$ for $y=1,\ldots ,l$, then, 
\begin{equation}
f(S | \phi) = \prod_{y=1}^{l}\int \Bigg(\prod_{\mathbf{x}\in S_{y}}f(\mathbf{%
x}|y,\rho _{y})\Bigg) f(\rho_{y})d\rho _{y}.  \label{eq:label_probability}
\end{equation}

\subsection{Gaussian RLPPs}

\label{sec:Gaussian_RLPPs}

Expressions for label function probabilities have been solved under several
models in~\cite{dalton2015analytic}. Here, we review an important case in
which clusters are Gaussian with random means and covariances. Specifically,
consider a separable RLPP where, for each $y\in \{1,\ldots ,l\}$, $\rho
_{y}=[\mu _{y},\Sigma _{y}]$ and $f(\mathbf{x}|y,\rho _{y})$ is a Gaussian
distribution with mean $\mu _{y}$ and covariance $\Sigma _{y}$. Given a
label function $\phi _{S}$, let $y\in \{1,\ldots ,l\}$ be fixed, and let $%
n_{y}$ be the number of points in $S$ assigned label $y$. For $n_{y}\geq 2$
it was shown in~\cite{dalton2015analytic} that 
\begin{equation}
\prod_{\mathbf{x}\in S_{y}}f(\mathbf{x}|y,\rho _{y})=(2\pi )^{-\frac{dn_{y}}{%
2}}|\Sigma _{y}|^{-\frac{n_{y}}{2}}\exp \left( -\frac{1}{2}\mathrm{tr}\left(
\Phi _{y}^{\ast }\Sigma _{y}^{-1}\right) \right) ,
\label{eq:gaussian_model0}
\end{equation}
where $|\cdot |$ is a determinant, $\mathrm{tr}(\cdot )$ is a trace, 
\begin{equation*}
\Phi _{y}^{\ast }=(n_{y}-1)\widehat{\Sigma }_{y}+n_{y}\left( \mu _{y}-%
\widehat{\mu }_{y}\right) \left( \mu _{y}-\widehat{\mu }_{y}\right) ^{T},
\end{equation*}
and $\widehat{\mu }_{y}$ and $\widehat{\Sigma }_{y}$ are the sample mean and
covariance of points in $S_{y}$, respectively. When $n_{y}=1$, %
\eqref{eq:gaussian_model0} holds with $\Phi _{y}^{\ast }=\left( \mu _{y}-%
\widehat{\mu }_{y}\right) \left( \mu _{y}-\widehat{\mu }_{y}\right) ^{T}$,
and when $n_{y}=0$ the product over an empty set is $1$.

Assume $f(\rho_y) = f(\Sigma_y) f(\mu_y | \Sigma_y)$, where $f(\mu_y |
\Sigma_y)$ is a Gaussian distribution with mean $\mathbf{m}_{y}$ and
covariance $\frac{1}{\nu_y}\Sigma _{y}$ with $\nu_y > 0$, and $f(\Sigma
_{y}) $ is an inverse-Wishart distribution with $\kappa_y > d-1$ degrees of
freedom and a positive-definite scale matrix $\Psi _{y}$, i.e., 
\begin{equation*}
f(\Sigma _{y})=\frac{|\Psi _{y}|^{\frac{\kappa _{y}}{2}} |\Sigma _{y}|^{-%
\frac{\kappa_{y}+d+1}{2}}} {2^{\frac{\kappa_{y}d}{2}}\Gamma _{d}(\frac{%
\kappa _{y}}{2})} \exp \left( -\frac{1}{2}\mathrm{tr}(\Psi
_{y}\Sigma_{y}^{-1})\right),
\end{equation*}
where $\Gamma _{d}$ is the multivariate Gamma function. The expected mean is 
$\mathbf{m}_y$, the expected covariance matrix is $\frac{1}{\kappa _{y}-d-1}
\Psi _{y}$ if $\kappa_y > d + 1$, and as $\nu_y$ and $\kappa_y$ increase $%
f(\rho_y)$ becomes more ``informative.'' The probability of label function $%
\phi_S$ under this RLPP is found from~\eqref{eq:label_probability_main} and~%
\eqref{eq:label_probability} as 
\begin{align}
P(\Phi_S = \phi _{S} | S) &\propto P(\Phi = \phi) \prod_{y=1}^{l}\frac{%
\Gamma _{d}(\frac{\kappa _{y}+n_{y}}{2})}{|n_{y}+\nu _{y}|^{\frac{d}{2}%
}|\Psi _{y}+\Psi _{y}^{\ast }|^{\frac{\kappa _{y}+n_{y}}{2}}},
\label{eq:probability_model1:general}
\end{align}
where 
\begin{equation}
\Psi _{y}^{\ast } =(n_{y}-1)\widehat{\Sigma }_{y}+\frac{\nu _{y} n_{y}}{\nu
_{y} + n_{y}}\left( \widehat{\mu }_{y}-\mathbf{m}_{y}\right) \left( \widehat{%
\mu }_{y}-\mathbf{m}_{y}\right) ^{T}  \label{eq:posterior_hyperparameter}
\end{equation}
for $n_y = 2$, $\Psi _{y}^{\ast }=\frac{\nu _{y}}{\nu _{y} + 1}\left( 
\widehat{\mu }_{y}-\mathbf{m}_{y}\right) \left(\widehat{\mu }_{y}-\mathbf{m}%
_{y}\right) ^{T}$ for $n_y = 1$, and $\Psi_y^{\ast} = 0$ for $n_y = 0$. If $%
\nu_1 = \cdots = \nu_l$, $\kappa_1 = \cdots = \kappa_l$ and $P(\Phi = \phi)$
is such that the size of each cluster is fixed and partitions with clusters
of the specified sizes are equally likely, then for any $\phi_S$ inducing
clusters of the correct sizes, 
\begin{equation}
P(\Phi_S = \phi _{S} | S) \propto \prod_{y=1}^{l}|\Psi _{y}+\Psi _{y}^{\ast
}|^{-\frac{\kappa _{y}+n_{y}}{2}}.  \label{eq:probability_model2:stratified}
\end{equation}
Similar derivations for the posterior on parameters under Gaussian mixture
models can be found in~\cite{degroot2005optimal}, and on
label functions under Gaussian mixture models can be found in~\cite%
{binder1978bayesian}.

\section{Robust Clustering Operators}

Under a known RLPP $(\Xi ,\Lambda )$, optimization in the Bayes clusterer is
over the set $\bar{\mathcal{C}}$ of all clustering algorithms with respect
to the clustering error, 
\begin{equation}
\zeta ^{\ast }=\arg \min_{\zeta \in \bar{\mathcal{C}}}\varepsilon (\zeta );
\label{eq:Bayes_clusterer}
\end{equation}
however, in practice the RLPP is likely to be uncertain. In this section we
present definitions for optimal Bayesian robust clustering and show that IBR
clusterers solve an optimization problem of the same form as in~%
\eqref{eq:Bayes_clusterer} under an effective process.

\subsection{Definitions of Robust Clustering}

We present three robust clustering operators: minimax robust clustering,
model-constrained Bayesian robust (MCBR) clustering, and intrinsically
optimal Bayesian robust (IBR) clustering. Our main interest is in IBR
clustering. The first two methods are provided to emphasize parallels
between the new theory and existing robust operator theory from filtering
and classification.

Consider a parameterized uncertainty class of RLPPs $(\Xi _{\theta },\Lambda
_{\theta }),\theta \in \Theta $, where $\Xi _{\theta }$ is a point process
on $(\mathbf{N},\mathcal{N})$, $\Lambda _{\theta }=\{\Phi _{\theta ,S}:S\in 
\mathbf{N}\}$ is a random labeling on $\mathbf{N}$ consisting of a random
label function $\Phi _{\theta ,S}$ for each $S$, and $\varepsilon _{\theta
}(\zeta )$ is the error of cluster operator $\zeta $ for state $\theta $.

A \emph{minimax robust clusterer} $\zeta _{\mathrm{MM}}$ is defined by~%
\eqref{MC_minimax} with $\mathcal{C}_{\Theta }$ being the set of
state-specific Bayes clusterers and $\varepsilon _{\theta }(\zeta )$ in
place of $\gamma _{\theta }(\psi )$. An \textit{MCBR cluster operator} $%
\zeta _{\mathrm{MCBR}}$ is defined by~\eqref{MCBR} with $\varepsilon
_{\theta }(\zeta )$ in place of $\gamma _{\theta }(\psi )$. 

Our focus is on optimization over the full class $\bar{\mathcal{C}}$ of
cluster operators. This yields an \textit{IBR cluster operator}, 
\begin{equation}
\zeta _{\mathrm{IBR}}=\arg \min_{\zeta \in \bar{\mathcal{C}}}E_{\theta
}[\varepsilon _{\theta }(\zeta )].  \label{eq:IBR_definition}
\end{equation}%
In analogy to \cite{dalton2014intrinsically}, where effective
characteristics for IBR linear filtering were derived from effective random
signal processes, here we show how IBR cluster operators can be found via
effective random labeled point processes.

\subsection{Effective Random Labeled Point Processes}

We begin with two definitions.

\begin{defn}
An RLPP is \textit{solvable under clusterer class} $\mathcal{C}$ if 
\begin{align*}
\zeta^{\ast} &= \arg \min_{\zeta \in \mathcal{C}} \varepsilon(\zeta)
\end{align*}
can be solved under this process.
\end{defn}

\begin{defn}
Let $\Theta $ be an uncertainty class of RLPPs having prior $\pi (\theta )$.
An RLPP $(\Xi _{\mathrm{eff}},\Lambda _{\mathrm{eff}})$ is an \textit{%
effective RLPP under clusterer class} $\mathcal{C}$ if for all $\zeta \in 
\mathcal{C}$ both the expected clustering error $E_{\theta }\left[
\varepsilon _{\theta }(\zeta )\right] $ under the uncertainty class of RLPPs
and the clustering error $\varepsilon _{\mathrm{eff}}(\zeta )$ under $(\Xi _{%
\mathrm{eff}},\Lambda _{\mathrm{eff}})$ exist and 
\begin{equation}
E_{\theta }\left[ \varepsilon _{\theta }(\zeta )\right] =\varepsilon _{%
\mathrm{eff}}(\zeta ).  \label{eq:effective_RLPP_definition}
\end{equation}
\end{defn}

\begin{thm}
Let $\Theta $ parameterize an uncertainty class of RLPPs with prior $\pi (\theta )$.
If there exists a solvable effective RLPP $(\Xi _{\mathrm{eff}},\Lambda _{%
\mathrm{eff}})$ under clusterer class $\mathcal{C}$ with optimal clusterer $%
\zeta _{\mathrm{eff}}^{\ast }$, then $\zeta _{\mathrm{eff}}^{\ast }=\arg
\min_{\zeta \in \mathcal{C}}E_{\theta }[\varepsilon _{\theta }(\zeta )]$. If 
$\mathcal{C}=\mathcal{C}_{\Theta }$, then $\zeta _{\mathrm{MCBR}}^{\ast
}=\zeta _{\mathrm{eff}}^{\ast }$, and if $\mathcal{C}=\bar{\mathcal{C}}$,
then $\zeta _{\mathrm{IBR}}^{\ast }=\zeta _{\mathrm{eff}}^{\ast }$. \label%
{theorem:evaluating_IBR}
\end{thm}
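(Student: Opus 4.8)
The plan is to reduce the theorem to a pure rewriting of the objective functional, using only the two structural properties packaged in Definitions~1 and~2. First I would invoke solvability: since $(\Xi _{\mathrm{eff}},\Lambda _{\mathrm{eff}})$ is solvable under $\mathcal{C}$ and is assumed to have an optimal clusterer $\zeta _{\mathrm{eff}}^{\ast }$, that clusterer is by definition $\zeta _{\mathrm{eff}}^{\ast }=\arg\min_{\zeta \in \mathcal{C}}\varepsilon _{\mathrm{eff}}(\zeta )$. Next I would invoke the effective property: by~\eqref{eq:effective_RLPP_definition}, for every $\zeta \in \mathcal{C}$ both $E_{\theta }[\varepsilon _{\theta }(\zeta )]$ and $\varepsilon _{\mathrm{eff}}(\zeta )$ exist and are equal. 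Hence the two maps $\zeta \mapsto \varepsilon _{\mathrm{eff}}(\zeta )$ and $\zeta \mapsto E_{\theta }[\varepsilon _{\theta }(\zeta )]$ coincide on all of $\mathcal{C}$, so they attain their infima over $\mathcal{C}$ at exactly the same clusterers. In particular $\zeta _{\mathrm{eff}}^{\ast }=\arg\min_{\zeta \in \mathcal{C}}E_{\theta }[\varepsilon _{\theta }(\zeta )]$, which is the first assertion.

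The two special cases then follow directly from the definitions of the robust clusterers given in Section~III-A. Taking $\mathcal{C}=\mathcal{C}_{\Theta }$, the class of state-specific Bayes clusterers, the quantity $\arg\min_{\zeta \in \mathcal{C}_{\Theta }}E_{\theta }[\varepsilon _{\theta }(\zeta )]$ is precisely the MCBR clusterer defined by~\eqref{MCBR} with $\varepsilon _{\theta }(\zeta )$ in place of $\gamma _{\theta }(\psi )$, so $\zeta _{\mathrm{MCBR}}^{\ast }=\zeta _{\mathrm{eff}}^{\ast }$. Taking $\mathcal{C}=\bar{\mathcal{C}}$, the quantity $\arg\min_{\zeta \in \bar{\mathcal{C}}}E_{\theta }[\varepsilon _{\theta }(\zeta )]$ is exactly the IBR clusterer of~\eqref{eq:IBR_definition}, so $\zeta _{\mathrm{IBR}}^{\ast }=\zeta _{\mathrm{eff}}^{\ast }$.

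I do not expect a genuine obstacle inside this argument: all of the mathematical substance has been pushed into Definition~2 (existence and equality of the two error functionals) and into the later construction of a concrete solvable effective RLPP. The points that need care in the write-up are bookkeeping ones: reading $\arg\min$ as the set of minimizers (or fixing a selection rule) so that the claimed equalities are meaningful when minimizers are non-unique; noting that $\zeta _{\mathrm{eff}}^{\ast }\in \mathcal{C}$ by construction, so the comparison with $\zeta _{\mathrm{MCBR}}^{\ast }$ and $\zeta _{\mathrm{IBR}}^{\ast }$ is genuinely over the same class on both sides; and observing that existence of $\zeta _{\mathrm{eff}}^{\ast }$ is part of the hypothesis (``with optimal clusterer $\zeta _{\mathrm{eff}}^{\ast }$''), so no separate existence argument is needed. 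If one wished to stress the analogy with \cite{dalton2014intrinsically}, one could close by remarking that, exactly as effective power spectra absorb model uncertainty for IBR linear filtering, here the effective RLPP absorbs the prior $\pi (\theta )$ into a single point process, after which \eqref{eq:Bayes_clusterer} applied to $(\Xi _{\mathrm{eff}},\Lambda _{\mathrm{eff}})$ yields the robust clusterer.
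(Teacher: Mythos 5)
Your argument is correct and is essentially the paper's own proof: both replace $E_{\theta }[\varepsilon _{\theta }(\zeta )]$ by $\varepsilon _{\mathrm{eff}}(\zeta )$ via the definition of an effective RLPP, conclude that the minimizers over $\mathcal{C}$ coincide, and then read off the MCBR and IBR cases from their definitions. Your additional remarks on non-unique minimizers and on existence being hypothesized are sensible bookkeeping but not a departure from the paper's route.
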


\begin{proof}
The proof is immediate from the definition of an effective RLPP and~\eqref{eq:Bayes_clusterer}: 
\begin{equation*}
\arg \min_{\zeta \in \mathcal{C}}E_{\theta }[\varepsilon _{\theta }(\zeta
)]=\arg \min_{\zeta \in \mathcal{C}}\varepsilon _{\mathrm{eff}}(\zeta
)=\zeta _{\mathrm{eff}}^{\ast }.
\end{equation*}
The solutions for MCBR and IBR clustering follow from their
definitions. 
\end{proof}

To find an MCBR or IBR clusterer, we first seek an effective RLPP. This
effective RLPP is not required to be a member of the uncertainty class
parameterized by $\theta $, but must be solvable. If $(\Xi _{\mathrm{eff}%
},\Lambda _{\mathrm{eff}})$ is an effective RLPP under clusterer class $%
\mathcal{C}$, then it is an effective RLPP under any smaller clusterer
class. Hence, an effective RLPP found for IBR clustering is also an
effective RLPP for MCBR clustering. However, not only are IBR clusterers
better performing than MCBR clusterers, they are typically much easier to
find analytically. In particular, the IBR clusterer is directly solved by
importing methods from Bayes clustering theory, i.e., one may solve~%
\eqref{eq:Bayes_clusterer} by minimizing the partition error over all
partitions of a point set $S$ under the effective RLPP. The MCBR clusterer,
on the other hand, is significantly hampered by computational overhead in
finding $\mathcal{C}_{\Theta }$ and actually evaluating the clustering error
for each $\zeta \in C_{\Theta }$. The next theorem addresses the existence
of effective RLPPs.

\begin{thm}
Let $\Theta $ parameterize an uncertainty class $\{(\Xi _{\theta },\Lambda
_{\theta })\}_{\theta \in \Theta }$ of RLPPs with prior $\pi (\theta )$.
There exists an RLPP, $(\Xi _{\mathrm{eff}},\Lambda _{\mathrm{eff}})$, such
that 
\begin{equation}
E_{\theta }\left[ E_{\Xi _{\theta },\Lambda _{\theta }}\left[ \left. g(\Xi
_{\theta },\Phi _{\theta ,\Xi _{\theta }})\right\vert \theta \right] \right]
=E_{\Xi _{\mathrm{eff}},\Lambda _{\mathrm{eff}}}\left[ g(\Xi _{\mathrm{eff}%
},\Phi _{\mathrm{eff},\Xi _{\mathrm{eff}}})\right]
\label{eq:effective_RLPP_error}
\end{equation}
for any real-valued measurable function, $g$. \label{theorem:effective_RLPP}
\end{thm}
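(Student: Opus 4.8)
The plan is to construct the effective RLPP explicitly as a mixture over $\theta$ and then verify the moment identity by an application of Fubini/Tonelli together with the tower property of conditional expectation. Concretely, I would define $(\Xi_{\mathrm{eff}},\Lambda_{\mathrm{eff}})$ on the product space as the law obtained by first drawing $\theta\sim\pi$, and then, conditioned on $\theta$, drawing $(\Xi_\theta,\Phi_{\theta,\Xi_\theta})$ according to the RLPP for state $\theta$. In measure-theoretic terms, for any measurable rectangle $Y\times\{\phi\}$ with $Y\in\mathcal{N}$, set
\begin{equation*}
P\big((\Xi_{\mathrm{eff}},\Lambda_{\mathrm{eff}})\in Y\times\{\phi\}\big)=\int_\Theta P\big((\Xi_\theta,\Phi_{\theta,\Xi_\theta})\in Y\times\{\phi\}\big)\,\pi(d\theta),
\end{equation*}
and extend to a probability measure on $(\mathbf{N},\mathcal{N})$ paired with the labeling by the Carathéodory/Kolmogorov extension, using that the integrand is a measurable function of $\theta$ (this measurability is exactly what makes the mixture well-defined). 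One should note that $\Xi_{\mathrm{eff}}$ is the $\pi$-mixture of the point processes $\Xi_\theta$, and $\Lambda_{\mathrm{eff}}$ is the corresponding mixture of labelings; in the separable case this recovers the familiar recipe of mixing the priors $f(\rho)$ and the label priors $P(\Phi=\phi)$ over $\theta$.

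Next I would verify \eqref{eq:effective_RLPP_error}. Writing out the left-hand side,
\begin{equation*}
E_\theta\Big[E_{\Xi_\theta,\Lambda_\theta}\big[g(\Xi_\theta,\Phi_{\theta,\Xi_\theta})\,\big|\,\theta\big]\Big]=\int_\Theta\Big(\int g(S,\phi_S)\,dP_{(\Xi_\theta,\Lambda_\theta)}(S,\phi_S)\Big)\pi(d\theta),
\end{equation*}
and by the construction of the mixture measure together with Fubini (valid because $g$ is assumed measurable and the iterated integral on the left exists), the right side of this equals $\int g(S,\phi_S)\,dP_{(\Xi_{\mathrm{eff}},\Lambda_{\mathrm{eff}})}(S,\phi_S)=E_{\Xi_{\mathrm{eff}},\Lambda_{\mathrm{eff}}}[g(\Xi_{\mathrm{eff}},\Phi_{\mathrm{eff},\Xi_{\mathrm{eff}}})]$, which is the right-hand side of \eqref{eq:effective_RLPP_error}. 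Alternatively, and perhaps more transparently, one can regard $(\theta,\Xi_{\mathrm{eff}},\Lambda_{\mathrm{eff}})$ as living on a single probability space where $\theta$ has marginal $\pi$ and the conditional law of $(\Xi_{\mathrm{eff}},\Lambda_{\mathrm{eff}})$ given $\theta$ is that of $(\Xi_\theta,\Lambda_\theta)$; then \eqref{eq:effective_RLPP_error} is just the tower property $E[E[g\mid\theta]]=E[g]$.

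The main obstacle is not the algebra but the measure-theoretic bookkeeping: one must ensure that $\theta\mapsto P\big((\Xi_\theta,\Phi_{\theta,\Xi_\theta})\in\cdot\big)$ is a measurable family (a regular conditional distribution / Markov kernel from $\Theta$ to $(\mathbf{N},\mathcal{N})\times L^{\mathbf{N}}$), so that the mixture integral defines a genuine probability measure and Fubini applies. This is a standard regularity hypothesis that is implicit in writing the uncertainty class and the prior $\pi(\theta)$, and one should state it as such. A secondary technical point is the existence of $\varepsilon_\theta(\zeta)$ and $\varepsilon_{\mathrm{eff}}(\zeta)$: the identity \eqref{eq:effective_RLPP_error} is stated for general measurable $g$, so specializing $g(S,\phi_S)$ to be the (cost-weighted) cluster mismatch quantity underlying $\varepsilon(S,\zeta(S))$ immediately yields $E_\theta[\varepsilon_\theta(\zeta)]=\varepsilon_{\mathrm{eff}}(\zeta)$ whenever the left-hand expectation is finite, which is precisely the condition needed for $(\Xi_{\mathrm{eff}},\Lambda_{\mathrm{eff}})$ to qualify as an effective RLPP in the sense of the preceding definition.
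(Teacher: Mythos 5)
Your proposal is correct and takes essentially the same route as the paper: the paper also realizes the effective RLPP as the $\pi$-mixture, setting $\Xi_{\mathrm{eff}}(\omega)=\Xi_{\vartheta(\omega)}(\omega)$ on a common sample space and taking the effective label probabilities to be the $\theta$-average of the state-conditional ones, and then verifies \eqref{eq:effective_RLPP_error} via the tower property $E_{\theta}[E[X|\theta]]=E[X]$ with $X=g(\Xi,\Phi_{\Xi})$. Your explicit flagging of the measurability of $\theta\mapsto P\big((\Xi_\theta,\Phi_{\theta,\Xi_\theta})\in\cdot\big)$ as a Markov kernel is a regularity point the paper leaves implicit, but otherwise the two arguments coincide.
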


\begin{proof}
Suppose that the parameter $\theta $ is a realization of a random vector, $\vartheta :(\Omega ,\mathcal{A},P)\rightarrow (\Theta ,\mathcal{B})$. Then $\{\vartheta ^{-1}(\theta ):\theta \in \Theta \}$ partitions the sample
space, $\Omega $. The point process $\Xi _{\theta }$ is thus a mapping
\begin{equation*}
\Xi _{\theta }:(\vartheta ^{-1}(\theta ),\mathcal{A}\cap \vartheta
^{-1}(\theta ),P_{\theta })\rightarrow (\mathbf{N},\mathcal{N}),
\end{equation*}
where $P_{\theta }$ is the conditional probability and we assume $\nu
_{\theta }(Y)=P_{\theta }(\Xi _{\theta }^{-1}(Y))$ for all $Y\in \mathcal{N}$
is known. Write the random labeling as $\Lambda _{\theta }=\{\Phi _{\theta
,S}:S\in \mathbf{N}\}$, where $\Phi _{\theta ,S}$ has a probability mass
function $P(\Phi _{\theta ,S}=\phi _{S}|\theta ,S)$ on $L^{S}$. Given any
real-valued measurable function $g$ mapping from point set and label
function pairs, let $X=g(\Xi ,\Phi _{\Xi })$ be a random variable where $(\Xi ,\Phi _{\Xi })$ is drawn from $\{(\Xi _{\theta },\Lambda _{\theta
})\}_{\theta \in \Theta }$ with prior $\pi (\theta )$, and note $E_{\theta
}[E[X|\theta ]]=E[X]$.

Let $\Xi _{\mathrm{eff}}:(\Omega ,\mathcal{A},P)\rightarrow (\mathbf{N},\mathcal{N})$ be a mapping, 
where given a fixed $\omega \in \Omega$ we have a corresponding fixed realization $\theta = \vartheta(\omega)$ and we define
$\Xi _{\mathrm{eff}}(\omega)=\Xi _{\theta }(\omega )$. Note
that
\begin{equation*}
\nu (Y)\equiv P(\Xi _{\mathrm{eff}}^{-1}(Y))=E_{\theta }[\nu _{\theta }(Y)]
\end{equation*}
and $\Xi _{\mathrm{eff}}$ is a random point process. Define $\Lambda _{\mathrm{eff}}=\{\Phi _{\mathrm{eff},S}:S\in \mathbf{N}\}$, where $\Phi _{\mathrm{eff},S}$ has a probability mass function
\begin{equation*}
P(\Phi _{\mathrm{eff},S}=\phi _{S}|S)=E_{\theta }[P(\Phi _{\theta ,S}=\phi
_{S}|\theta ,S)]
\end{equation*}
for all $\phi _{S}\in L^{S}$. Thus, $\Lambda _{\mathrm{eff}}$ is a random
labeling. Let $Z=g(\Xi _{\mathrm{eff}},\Phi _{\mathrm{eff},\Xi _{\mathrm{eff}}})$ be a random variable where $(\Xi _{\mathrm{eff}},\Phi _{\mathrm{eff},\Xi _{\mathrm{eff}}})$ is drawn from the RLPP we have constructed, $(\Xi _{\mathrm{eff}},\Lambda _{\mathrm{eff}})$, and note $E[X]=E[Z]$. 
\end{proof}

Theorem~\ref{theorem:effective_RLPP} applies for any function $g(S,\phi
_{S}) $, including the cluster mismatch error $g(S,\phi _{S})=\varepsilon
(S,\phi _{S},\zeta (S))$, for any clusterer $\zeta \in \bar{\mathcal{C}}$.
Thus, \eqref{eq:effective_RLPP_error} implies 
\begin{align*}
E_{\theta }\left[ \varepsilon _{\theta }(\zeta )\right] =& E_{\theta }\left[
E_{\Xi _{\theta },\Lambda _{\theta }}\left[ \varepsilon (\Xi _{\theta },\Phi
_{\theta ,\Xi _{\theta }},\zeta (\Xi _{\theta }))|\theta \right] \right] \\
=& E_{\Xi _{\mathrm{eff}},\Lambda _{\mathrm{eff}}}\left[ \varepsilon (\Xi _{%
\mathrm{eff}},\Phi _{\mathrm{eff},\Xi _{\mathrm{eff}}},\zeta (\Xi _{\mathrm{%
eff}}))\right] =\varepsilon _{\mathrm{eff}}(\zeta ).
\end{align*}%
Hence, $(\Xi _{\mathrm{eff}},\Lambda _{\mathrm{eff}})$ is an effective RLPP
on $\bar{\mathcal{C}}$, covering MCBR and IBR clusterers.

The following corollary shows that for separable RLPPs, the effective RLPP
is also separable and aggregates uncertainty within and between models.

\begin{cor}
Let each RLPP in the uncertainty class be parameterized by $\rho $ with
prior density $f(\rho |\theta )$, let $\Phi$ be an independent labeling
process with a probability mass $P(\Phi =\phi )$ that depends on neither $%
\theta $ nor $\rho $, and denote the conditional distribution of points by $%
f(\mathbf{x}|y,\rho ,\theta )$. Then the effective RLPP is separable with
parameter $[\theta ,\rho ] $, prior $f(\theta ,\rho )$, an independent
labeling process with probability mass $P(\Phi =\phi )$, and conditional
distributions $f(\mathbf{x}|y,\rho ,\theta )$. \label%
{corollary:effective_RLPP}
\end{cor}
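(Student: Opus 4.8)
The plan is to prove the corollary as a specialization of Theorem~\ref{theorem:effective_RLPP}: I would exhibit an explicit separable RLPP and verify directly that it satisfies the expectation-matching identity~\eqref{eq:effective_RLPP_error}, which, by the discussion immediately following Theorem~\ref{theorem:effective_RLPP}, is exactly what makes an RLPP effective on $\bar{\mathcal{C}}$ (hence on the smaller class $\mathcal{C}_{\Theta}$). Let $(\Xi',\Lambda')$ denote the separable RLPP with parameter $[\theta,\rho]$, prior $f(\theta,\rho)=f(\rho|\theta)\pi(\theta)$, independent labeling process with probability mass $P(\Phi=\phi)$, and conditional distributions $f(\mathbf{x}|y,\rho,\theta)$; this is a \emph{bona fide} separable RLPP, since $f(\theta,\rho)$ is a proper joint density, $P(\Phi=\phi)$ depends on neither component of the parameter, and each $f(\cdot|y,\rho,\theta)$ is a probability density, so its label-function probabilities are given by~\eqref{eq:label_probability_main}--\eqref{eq:label_probability_joint} with $\rho$ replaced by $[\theta,\rho]$ and $f(\rho)$ by $f(\theta,\rho)$.

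The core of the argument is a tower-property computation. Since the clustering error of any $\zeta$ under an RLPP equals $E_{\Xi,\Lambda}[\varepsilon(\Xi,\Phi_\Xi,\zeta(\Xi))]$, an expectation of the measurable map $g(S,\phi_S)=\varepsilon(S,\phi_S,\zeta(S))$ over the joint law of $(\Xi,\Phi_\Xi)$, it suffices to establish
\[
E_{\Xi',\Lambda'}\big[\,g(\Xi',\Phi'_{\Xi'})\,\big]=E_{\theta}\big[\,E_{\Xi_\theta,\Lambda_\theta}[\,g(\Xi_\theta,\Phi_{\theta,\Xi_\theta})\mid\theta\,]\,\big]
\]
for every real-valued measurable $g$. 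The generative description of $(\Xi',\Lambda')$ --- draw $\theta\sim\pi(\theta)$, then $\rho\sim f(\rho|\theta)$, then $\phi\sim P(\Phi=\phi)$, then $\mathbf{x}_i\sim f(\cdot|\phi(i),\rho,\theta)$, and form the pair $(S,\phi_S)$ --- coincides, after conditioning on $\theta$, with the generative description of the $\theta$-th RLPP $(\Xi_\theta,\Lambda_\theta)$, which by hypothesis is separable with parameter $\rho$, prior $f(\rho|\theta)$, labeling mass $P(\Phi=\phi)$, and conditionals $f(\mathbf{x}|y,\rho,\theta)$. Hence the conditional law of $(\Xi',\Phi'_{\Xi'})$ given $\theta$ equals that of $(\Xi_\theta,\Phi_{\theta,\Xi_\theta})$, and the displayed identity follows by iterated expectation. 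Equivalently, at the density level one checks with~\eqref{eq:label_probability_joint} that the joint density of $(S,\phi_S)$ under $(\Xi',\Lambda')$ is $P(\Phi=\phi)\int f_\theta(S|\phi)\pi(\theta)\,d\theta = P(\Phi=\phi)E_\theta[f_\theta(S|\phi)]$, precisely the $\theta$-mixture of the component joint densities $P(\Phi=\phi)f_\theta(S|\phi)$; marginalization and Bayes' rule then recover the point process $\nu=E_\theta[\nu_\theta]$ and the (posterior-averaged) labeling of the effective RLPP constructed in the proof of Theorem~\ref{theorem:effective_RLPP}.

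I expect the main obstacle to be bookkeeping rather than conceptual: one must carefully align the conditional law of $(\Xi',\Phi'_{\Xi'})$ given $\vartheta=\theta$ with the explicit construction of $\Xi_\theta$ on $(\vartheta^{-1}(\theta),\mathcal{A}\cap\vartheta^{-1}(\theta),P_\theta)$ used in the proof of Theorem~\ref{theorem:effective_RLPP}, keeping track of the finite-sequence representation of point sets and the $\sigma$-algebra $\mathcal{N}$; this is the same routine measurability argument already made there. The one hypothesis entering in an essential way is that $P(\Phi=\phi)$ depends on neither $\theta$ nor $\rho$: it is exactly this that keeps the effective labeling prior equal to the single, unaveraged mass $P(\Phi=\phi)$, so that $(\Xi',\Lambda')$ is separable in the claimed form rather than merely an RLPP whose labeling prior is a $\theta$-mixture.
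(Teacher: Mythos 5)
Your proof is correct and is essentially the paper's argument: both rest on identifying the conditional density $\int\!\!\int \bigl(\prod_{i}f(\mathbf{x}_i|\phi(i),\rho,\theta)\bigr)f(\rho|\theta)\,d\rho\,\pi(\theta)\,d\theta$ as simultaneously the law of the effective RLPP constructed in Theorem~\ref{theorem:effective_RLPP} and the law of the claimed separable RLPP with parameter $[\theta,\rho]$ and prior $f(\theta,\rho)=\pi(\theta)f(\rho|\theta)$, with the independence of $P(\Phi=\phi)$ from $\theta$ and $\rho$ preserving separability. The only (immaterial) difference is direction: the paper starts from the Theorem~\ref{theorem:effective_RLPP} effective RLPP and shows it is separable of the stated form, whereas you start from the stated separable RLPP and verify the effectiveness identity via the tower property.
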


\begin{proof}
Let the number of points, $n$, and the label function $\phi : \{1, \ldots, n\} \to L^n$ be fixed. For
a fixed $\theta $, the effective random point process $\Xi _{\mathrm{eff}}(\omega )$ is set equal to $\Xi _{\theta }(\omega )$. 
Equivalently, a realization of $S=\{\mathbf{x}_{1},\ldots ,\mathbf{x}_{n}\}$
under the effective RLPP is governed by the distribution 
\begin{equation*}
f(S | \phi)
= \int \left( \int 
\Bigg( \prod_{i=1}^{n}f(\mathbf{x}_{i}|\phi (i), \rho ,\theta ) \Bigg) f(\rho |\theta ) d\rho \right) \pi (\theta ) d\theta.
\end{equation*}
This is equivalent to a separable random point process with parameter $[\theta
,\rho ]$, prior $f(\theta ,\rho )=\pi(\theta )f(\rho |\theta )$ and
conditional distributions $f(\mathbf{x}|y,\rho ,\theta )$. Since the
labeling process is independent, the full effective RLPP is the separable RLPP
given in the statement of the corollary.
\end{proof}

A graphical model of the uncertainty class of RLPPs assumed in Corollary~\ref{corollary:effective_RLPP} is provided in Fig.~\ref{fig:graphical_model}.
The IBR clusterer can be found as follows. 
\begin{enumerate}
	\item We input an uncertainty class of RLPPs of the form stated in Corollary~\ref{corollary:effective_RLPP} and illustrated in Fig.~\ref{fig:graphical_model}.  In particular, we require the sample size, $n$, prior $\pi(\theta)$, label process probability mass function $P(\Phi =\phi)$, parameter prior $f(\rho |\theta )$ and conditional density $f(\mathbf{x}|y,\rho ,\theta )$.  
	
	\item By Corollary~\ref{corollary:effective_RLPP}, the effective RLPP is found by merging uncertainty in the state (across RLPPs) and parameters (within RLPPs). In particular, the effective RLPP is characterized by the sample size $n$, label process probability mass function $P(\Phi =\phi)$, parameter prior $f(\theta, \rho)$ and density $f(\mathbf{x}|y,\rho ,\theta )$.  
	
	\item By Theorem~\ref{theorem:evaluating_IBR}, the IBR clusterer is the Bayes (optimal) clusterer under the effective RLPP.  Given point set $S$, the IBR clusterer outputs
	the partition $\mathcal{P}_S$ corresponding to the minimal
	error $\varepsilon(S, \mathcal{P}_S)$ in~%
	\eqref{eq:error_of_partition:vectors}.  The natural cost function $c_S$ is a constant function given by~\eqref{eq:cost_function:final}, the partition probabilities are given by~\eqref{eq:partition_probability}, and the label function probabilities $P(\Phi_S =
	\phi_S | S)$ under the effective (separable) RLPP are given by~\eqref{eq:label_probability_main} and the likelihood function~\eqref{eq:label_probability_joint} with $f(\theta, \rho)$ in place of $f(\rho)$ and $f(\mathbf{x}|y,\rho ,\theta )$ in place of $f(\mathbf{x}|y,\rho )$.  
\end{enumerate}
In practice, the primary issues are: 
(a) deriving an analytical form for the label function probability, $P(\Phi_S =
\phi_S | S)$, 
(b) evaluating the natural cost, $c_S$, for all pairs of partitions, 
and (c) evaluating partition errors, $\varepsilon(S, \mathcal{P}_S)$, for all partitions.  
Note that $P(\Phi_S = \phi_S | S)$ is available for Gaussian separable RLPPs in~\eqref{eq:probability_model1:general}.  
Issues (b) and (c) may also be alleviated using optimal and suboptimal algorithms, as discussed in~\cite{dalton2015analytic}.  

\begin{figure}[tb!]
	\centering
	\includegraphics[width=0.35\textwidth]{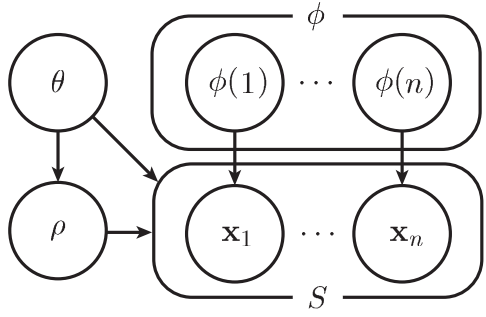}
	\caption{A graphical model of the uncertainty class of RLPPs assumed in Corollary~\ref{corollary:effective_RLPP}.  The parameter $\theta$ is governed by a prior distribution $\pi(\theta)$ and indexes each RLPP in the uncertainty class.  The number of points, $n$, may be generated from an independent process, or considered fixed.  For fixed $n$, the label function, $\phi$, is generated according to the probability mass function $P(\Phi =\phi)$.  Given $\theta$, $\rho$ is generated from the density $f(\rho |\theta )$, and each point in the point set $S = \{\mathbf{x}_1, \ldots, \mathbf{x}_n\}$ is drawn from the density $f(\mathbf{x}|y,\rho ,\theta )$, where the corresponding label for point $\mathbf{x}_i$ is $y = \phi(i)$. }
	\label{fig:graphical_model}
\end{figure}

\section{Robust Clustering Under Gaussian RLPPs}

\label{sec:simulation_Gaussian}

Consider synthetic Gaussian data with $l=2$ clusters in $d=1$, $2$, $10$, $%
100$ and $1,000$ dimensions. The state of nature is composed of the cluster
covariances, and for a given state of nature the point process generates
equal sized Gaussian clusters with random means and the corresponding
covariances. Formally, we parameterize the uncertainty class of RLPPs with $%
\theta =[\theta _{1},\theta _{2}]$, where $\theta _{y}=\Sigma _{y}$, and
each $\Sigma _{y}$ is drawn independently from an inverse-Wishart
distribution with $\kappa _{y}$ degrees of freedom and scale matrix $\Psi
_{y}$. The RLPP in the uncertainty class corresponding to $\theta $, $(\Xi
_{\theta },\Lambda _{\theta })$, is a separable RLPP with parameter $\rho
_{y}=\mu _{y}$, Gaussian prior $f(\rho _{y})$ with mean $\mathbf{m}_{y}$ and
covariance $\Sigma _{y}/\nu _{y}$, and Gaussian conditional distributions $f(%
\mathbf{x}|y,\rho _{y},\theta _{y})$ with mean $\mu _{y}$ and covariance $%
\Sigma _{y}$. We set $\kappa _{1}=\kappa _{2}=d+2$, $\Psi _{1}=\Psi _{2}$ to
be $d\times d$ identity matrices, $\nu _{1}=\nu _{2}=1$, and $\mathbf{m}_{1}=%
\mathbf{m}_{2}$ to be all-zero vectors. The number of points, $n=n_{1}+n_{2}$%
, is set to $10$ or $100$ with $n_{1}=n_{2}$, and the labels are permuted.
Thus, the true distribution on label functions, $P(\Phi =\phi )$, has a
support on the set of label functions that assign the correct number of
points to each cluster, and is uniform on its support. %

For each combination of $d$ and $n$, we generate $1,000$ states of nature, $%
\theta$, and one point set per state of nature from the corresponding
separable RLPP $(\Xi_{\theta}, \Lambda_{\theta})$. For each point set, we
run several classical clustering algorithms: fuzzy $c$-means (FCM), $k$%
-means (KM), hierarchical clustering with single linkage (H-S), hierarchical
clustering with average linkage (H-A), hierarchical clustering with complete
linkage (H-C), and a clusterer that produces a random partition with equal
sized clusters for reference (Random). More details about these algorithms
may be found in~\cite{dalton2009clustering}. In addition, we cluster using
expectation maximization for Gaussian mixture models (EM), and a method that
minimizes a lower bound on the posterior expected variation of information
under an estimated posterior similarity matrix generated from samples of a
Gibbs sampler for Gaussian mixture models (MCMC)~\cite{wade2015bayesian}. EM
is run using the mclust package in R with default settings~\cite%
{fraley2002model, fraley2012normal, manualR}. The Gibbs sampler is
implemented using the bayesm package in R with $18,000$ samples generated
after a burn-in period of $2,000$ samples, and otherwise default settings~%
\cite{Rossi2017bayesm}. The posterior similarity matrix is estimated using
the mcclust package in R~\cite{Fritsch2012mcclust}, and minimization with
respect to variation of information is implemented with the mcclust.ext
package in R~\cite{Wade2015mcclust}. We also implement EM informed with the
``correct'' hyperparameters, $\kappa_y$, $\Psi_y$, $\nu_y$ and $\mathbf{m}_y$
(EM-I) and MCMC informed with the ``correct'' hyperparameters (MCMC-I).

To find the IBR clusterer, the effective RLPP, $(\Xi, \Lambda)$, is
constructed using Corollary~\ref{corollary:effective_RLPP}, which states
that the effective RLPP merges uncertainty in the state $\theta$ with
uncertainty in the parameter $\rho$. In this case, the effective RLPP is
precisely the separable RLPP presented in Section~\ref{sec:Gaussian_RLPPs},
which accounts for both random means in $\rho$ and random covariances in $%
\theta$. The effective RLPP is solvable (at least for small point sets)
using the Bayes clusterer presented in~\cite{dalton2015analytic}. By Theorem~%
\ref{theorem:evaluating_IBR}, the IBR clusterer is equivalent to the Bayes
clusterer under the effective RLPP. Thus, the IBR clusterer can be found
when $n = 10$ by evaluating~\eqref{eq:error_of_partition:final} for all
partitions using~\eqref{eq:partition_probability} and~%
\eqref{eq:probability_model2:stratified}, and choosing the minimizing
partition. When $n = 100$, we approximate the IBR clusterer (IBR-A) using a
sub-optimal algorithm, Suboptimal Pseed Fast, presented in~\cite%
{dalton2015analytic}, which finds the maximum probability partition for a
random subset of $10$ points, generalizes these clusters to the full point
set using a QDA classifier, iteratively searches for the highest probability
partition on the full point set by considering all partitions with at most
two points clustered differently from the best partition found so far, and
finally chooses the highest probability partition resulting from $10$
repetitions with different initial subsets of points. MCBR and minimax
robust clusterers are not found, since they are computationally infeasible.
Furthermore, having found an IBR clusterer one would certainly not use an
MCBR clusterer and very likely not use a minimax robust clusterer.

For each point set and each algorithm, we find the cluster mismatch error
between the true partition and the algorithm output using~%
\eqref{eq:mismatch_error}. For each combination of $d$ and $n$ and each
algorithm, we approximate the average partition error, $E_{\theta
}[\varepsilon _{\theta }(\zeta )]$, under the natural cost function using
the average cluster mismatch error across all $1,000$ point sets. Figure~\ref%
{fig:clustering_error_Gaussian}(a) presents a graph of these errors with
respect to $d$ for $n=10$, and similarly Figure~\ref%
{fig:clustering_error_Gaussian}(b) presents performance for $n=100$. These
graphs support the fact that the IBR clusterer is optimal in these
simulations when $n=10$, and that the approximate IBR clusterer is close to
optimal when $n=100$. Indeed, the IBR clusterer performs significantly
better than all other algorithms under high dimensions.

When the number of points is large ($n = 100$) and the number of dimensions
is smaller than the number of points, the performances of EM and EM-I are
very close to the approximate IBR clusterer. However, when the number of
points is small, or the number of dimensions is larger than the number of
points, these algorithms tend to be similar to FCM and KM. This is most
likely because mclust tests several different modeling assumptions regarding
the covariances, and uses the Bayesian information criterion (BIC) to select
a final output partition. When $n$ is small relative to $d$, the full
covariances of the Gaussian mixtures cannot be estimated well, so there is a
tendency to select simpler models that assume covariances are equal and
circular, which is essentially the same assumption made by FCM and KM.
Finally note that MCMC by default uses a particular normal-inverse-Wishart
prior with hyperparameters that do not match the ``correct''
hyperparameters. The fact that MCMC-I performs much better than MCMC
suggests that this method may be quite sensitive to the priors, especially
when the sample size is small. 

\begin{figure*}[tb!]
\centering
\subfloat[]{\includegraphics[width=0.48\textwidth]{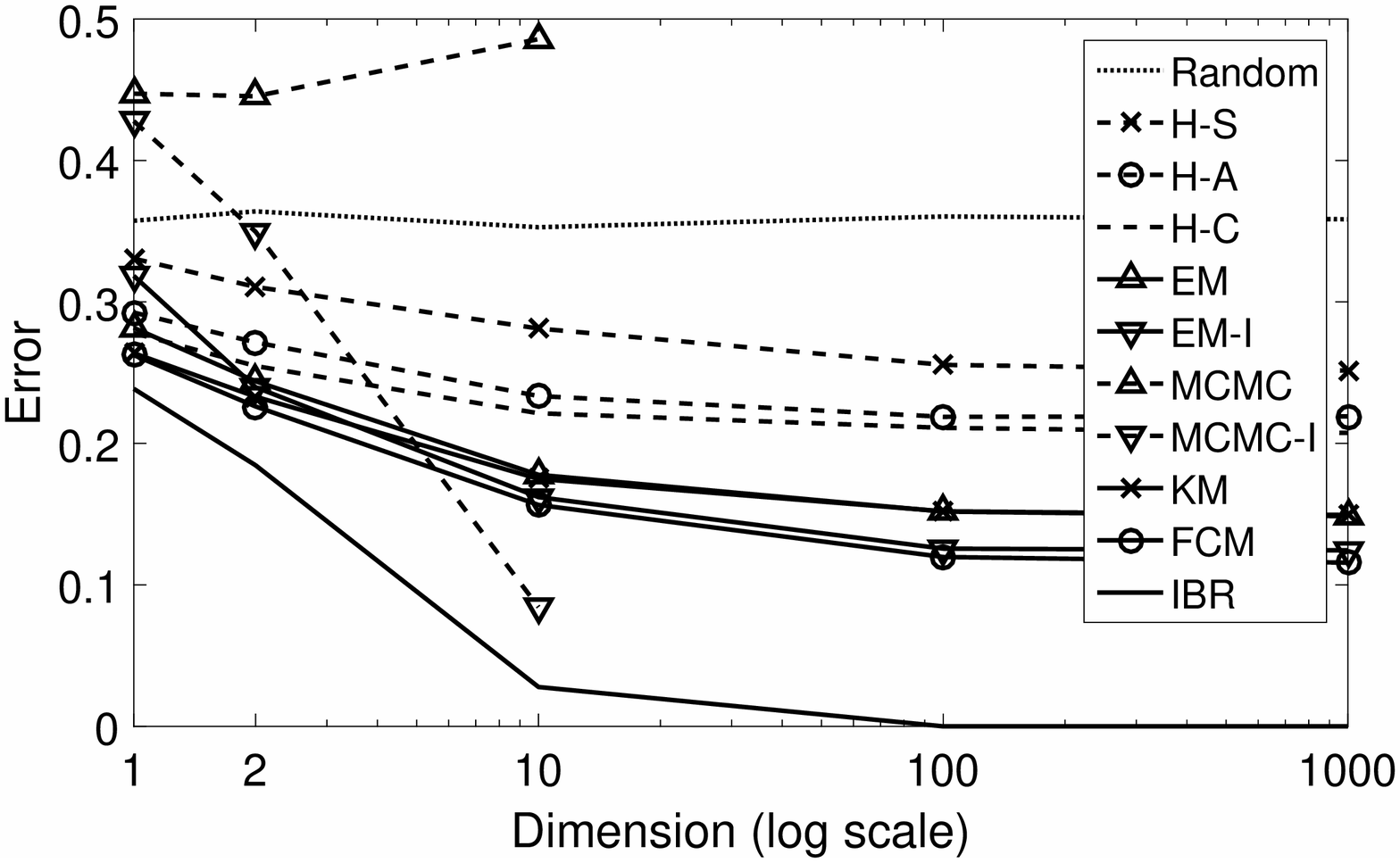}} 
\hfill
\subfloat[]{\includegraphics[width=0.48\textwidth]{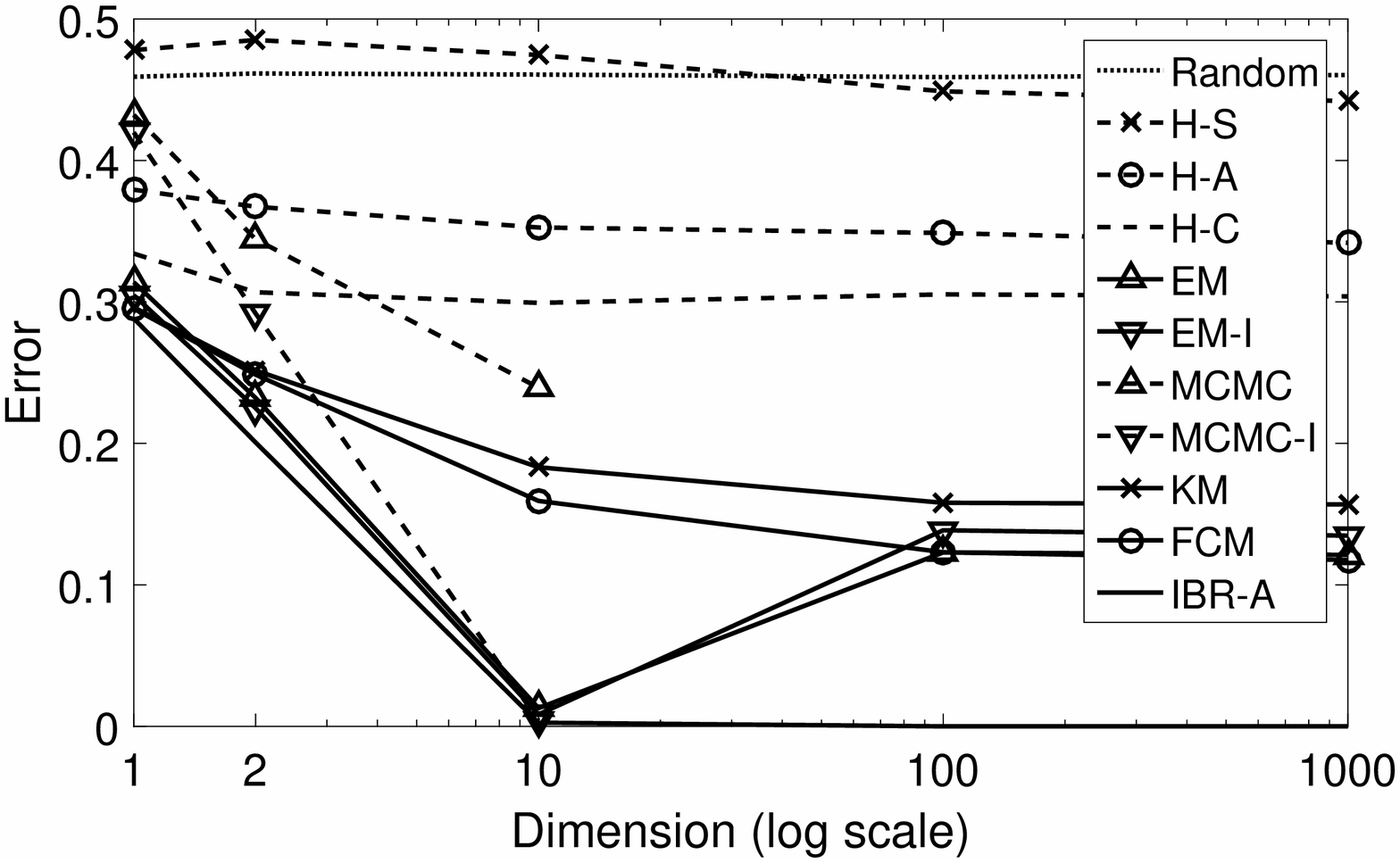}}
\caption{Average cluster mismatch error for Gaussian RLPPs: (a) $n = 10$,
(b) $n = 100$.}
\label{fig:clustering_error_Gaussian}
\end{figure*}

\section{Robust Clustering in Granular Imaging}

While digital photography may now dominate over chemical photography,
silver-based imaging remains important and is currently growing in use.
Research remains active. Crystal shape is of particular importance. For many
years granulometric analysis has been important in particle and texture
analysis. In particular, morphological granulometries can generate image
features relating to the size, shape, and concentration of particles. We
present an application of robust clustering for images of silver-halide
photographic T-grain crystals with respect to grain proportions using
granulometric features.


\subsection{Morphological Granulometries}

A basic model for silver-halide emulsions includes grains that are
equilateral triangles, hexagons formed by removing triangle corners, rods
(rectangles), and ill-formed blobs. To simplify calculations, we focus on a
binary image model using only triangles and rods. In film grade emulsions
grains overlap, but for laboratory analysis diluted emulsions with
negligible overlapping can be produced, thus we also focus on images with
non-overlapping grains.

Morphological granulometries are particularly well-suited for modeling and
processing binary images consisting of grains of different sizes and shapes.
The most commonly employed granulometry is a family of parameterized
morphological openings: for a convex, compact \emph{structuring element}
(set) $B$, a \textit{granulometry} $\{\Psi _{t}\}$ is defined by $\Psi
_{t}(I)=I\circ tB$ for $t>0$ and $\Psi _{0}(I)=I$, where $I\circ tB=\cup
\{tB+x:$ $tB+x\subset I\}$ is the opening of image (set) $I$ by $tB$ (more
general granulometries exist~\cite{matheron1975random}). If $%
\Omega _{I}(t)$ is the area of $\Psi _{t}(I)$, then $\Omega _{I}(t)$ is a
decreasing function of $t$, known as a \textit{size distribution}. A
normalized size distribution is defined by $\Phi _{I}(t)=1-\Omega
_{I}(t)/\Omega _{I}(0)$. If $I$ is compact and $B$ consists of more than a
single point, then $\Phi _{I}(t)$ increases from 0 to 1 and is continuous
from the left. Thus, it defines a probability distribution function called
the \textit{pattern spectrum} of $I$ (relative to $B$). Moments of $\Phi
_{I}(t)$ are used for image classification and segmentation~\cite%
{Dougherty:91}. $\Phi _{I}(t)$ is a random function
and its non-central moments (called \textit{granulometric moments}) are
random variables.

In this work, we use granulometric moments as features for clustering. Given
a set $I$, we extract as features the first $q$ granulometric moments of $I$
generated by granulometries arising from $p$ structuring elements $%
B_{1},B_{2},\ldots ,B_{p}$, where we denote the $k$th granulometric moment
corresponding to $B_{j}$ by $\mu ^{(k)}(I,B_{j})$ for $j=1,2,\ldots ,p$ and $%
k=1,2,\ldots ,q$. Consider a random set $I$ of the form 
\begin{equation}
I=\bigcup_{i=1}^{m}\bigcup_{j=1}^{N_{i}}\left( r_{ij}A_{i}+x_{ij}\right) ,
\label{eq:image_model}
\end{equation}%
where $A_{1},A_{2},\ldots ,A_{m}$ are compact sets called \emph{primitives}, 
$r_{ij}$ and $x_{ij}$ specify the radius (grain size) and center of the $j$%
th grain of primitive type $i$, respectively, and all $N=N_{1}+\ldots +N_{m}$
grains are mutually disjoint.

In the silver halide application, we assume preprocessed images are well
modeled by~\eqref{eq:image_model}, where $m=2$, $A_{1}$ is an equilateral
triangle with horizontal base, and $A_{2}$ is a vertical rod with height $5$
times its base. Without loss of generality, we assume both primitives have
unit area, i.e., $\nu \lbrack A_{1}]=\nu \lbrack A_{2}]=1$, and we denote
the grain proportions by $b_{1}$ and $b_{2}$. We further assume the $r_{ij}$
are independent with the $r_{i1},\ldots ,r_{iN_{i}}$ identically
distributed, where the \emph{grain sizing distribution} for primitive $i$
has the property $E[r_{ij}^{k}]=\gamma _{ik}\beta ^{k}$ for all $k>0$ and 
$\gamma _{ik}$ and $\beta $ are positive constants. If $r_{ij}\sim $ gamma$%
(\alpha _{i},\beta )$, $\beta $ being the scale parameter for both
primitives, then this property holds with $\gamma _{ik}=\Gamma (\alpha
_{i}+k)/\Gamma (\alpha _{i})$.

For the morphological opening, we use $p=2$ structuring elements, where $%
B_{1}$ and $B_{2}$ are, respectively, vertical and horizontal linear
structuring elements. The first $q=2$ granulometric moments for $B_{1}$ and $%
B_{2}$ are 
\begin{equation*}
\mathbf{z}=%
\begin{bmatrix}
\mu ^{(1)}(I,B_{1})\;\;\mu ^{(1)}(I,B_{2})\;\;\mu ^{(2)}(I,B_{1})\;\;\mu
^{(2)}(I,B_{2})%
\end{bmatrix}%
^{T}.
\end{equation*}%
Given the constants $\mu ^{(k)}(A_{i},B_{j})$ and the radii $r_{ij}$ of all
grains, the exact moments in $\mathbf{z}$ under the granulometric model may
be found analytically (see Theorem~\ref{thm:Sand98} in the Appendix). 
In particular, $\mathbf{z}=M\mathbf{x}$, where 
\begin{equation*}
M=%
\begin{bmatrix}
\mu ^{(1)}(A_{1},B_{1}) & \mu ^{(1)}(A_{2},B_{1}) & 0 & 0 \\ 
\mu ^{(1)}(A_{1},B_{2}) & \mu ^{(1)}(A_{2},B_{2}) & 0 & 0 \\ 
0 & 0 & \mu ^{(2)}(A_{1},B_{1}) & \mu ^{(2)}(A_{2},B_{1}) \\ 
0 & 0 & \mu ^{(2)}(A_{1},B_{2}) & \mu ^{(2)}(A_{2},B_{2})%
\end{bmatrix}%
,
\end{equation*}%
and $\mathbf{x}=[x_{11},x_{21},x_{12},x_{22}]^{T}$, where 
\begin{equation}
x_{ik}=\frac{\sum_{j=1}^{N_{i}}r_{ij}^{k+2}}{\sum_{j=1}^{N_{1}}r_{1j}^{2}+%
\sum_{j=1}^{N_{2}}r_{2j}^{2}}.  \label{eq:x}
\end{equation}%
In general, the constants $\mu ^{(k)}(A_{i},B_{j})$ under convex grains can
be found using theory from~\cite{Dougherty:95}. It can be shown that for
triangle $A_{1}$ and vertical structuring element $B_{1}$ that $\mu
^{(1)}(A_{1},B_{1})=2\cdot 3^{-3/4}$ and $\mu
^{(2)}(A_{1},B_{1})=2^{-1}3^{1/2}$. Similarly, for other combinations of
primitives and structuring elements, $\mu ^{(1)}(A_{1},B_{2})=4\cdot
3^{-5/4} $, $\mu ^{(2)}(A_{1},B_{2})=2\cdot 3^{-1/2}$, $\mu
^{(1)}(A_{2},B_{1})=5^{1/2}$, $\mu ^{(2)}(A_{2},B_{1})=5$, $\mu
^{(1)}(A_{2},B_{2})=5^{-1/2}$ and $\mu ^{(2)}(A_{2},B_{2})=5^{-1}$.

In the current application, we cluster on the features $\mathbf{x}=M^{-1}%
\mathbf{z}$. 
%
Lemma~\ref{lem:granulometry} in the Appendix guarantees asymptotic joint
normality and provides analytic expressions for the asymptotic mean and
covariance of granulometric moments under multiple primitives and multiple
structuring elements. In particular, given the grain proportions $b_{1}$ and 
$b_{2}$, and the grain sizing parameters $\beta $ and $\gamma _{ik}$ for $%
i=1,2$ and $k=2,3,4$, $\mathbf{x}$ has asymptotic mean 
\begin{equation}
\frac{1}{b_{1}\gamma _{12}+b_{2}\gamma _{22}} 
\begin{bmatrix}
b_{1}\gamma _{13}\beta & b_{2}\gamma _{23}\beta & b_{1}\gamma _{14}\beta^{2}
& b_{2}\gamma _{24}\beta ^{2}%
\end{bmatrix}%
^{T}  \label{eq:mu}
\end{equation}
and covariance matrix 
\begin{equation}
\frac{1}{N(b_{1}\gamma _{12}+b_{2}\gamma _{22})^{4}} 
\begin{bmatrix}
A_{11}\beta ^{2} & A_{12}\beta ^{3} \\ 
A_{21}\beta ^{3} & A_{22}\beta ^{4}%
\end{bmatrix}%
,  \label{eq:Sigma}
\end{equation}
where the $A_{ij}$ are $2\times 2$ matrices that depend on only the $b_{i}$
and $\gamma _{ik}$, and are provided in the Appendix.

\subsection{Robust Clustering}

Suppose we are given a collection of $n$ binary images of mixtures of
silver-halide photographic T-grain crystals, where each image belongs to one
of two groups, indexed by $y=1,2$. Images in class $1$ and $2$ have
different proportions of triangles, $b_{1}$, and different sizing
parameters, thereby providing different photographic properties. Our
objective is to cluster the images into the two groups (our concern is
partitioning, not labeling) based on feature vectors $\mathbf{x}=M^{-1}%
\mathbf{z}$ obtained from moments of morphological openings $\mathbf{z}$.

Given the grain sizing distributions and a prior $f(\rho)$, the asymptotic
joint normality of $\mathbf{x}$ motivates a separable RLPP model where,
given $y$ and $\rho$, $f(\mathbf{x} | y, \rho)$ is a Gaussian distribution
with mean and covariance given by~\eqref{eq:mu} and~\eqref{eq:Sigma},
respectively. We substitute $\rho$ and $1 - \rho$ in place of $b_1$ and $b_2$
under class 1, and vice-versa under class 2. For simplicity, we assume $%
P(\Phi = \phi)$ is uniform with support such that the number of images in
each class is known.

The grain sizing distribution in a binarized image typically depends on the
image thresholding method and other factors, and thus is unknown. To account
for this, we model an uncertainty class of RLPPs parameterized by $\theta $,
where the grain sizes are assumed to be gamma$(\alpha _{iy},\beta _{y})$
distributed, the $\alpha _{iy}$ parameters are fixed and known, the $\beta
_{y}$ depend deterministically on $\theta $, and we assume $\theta $ and $%
\rho $ are mutually independent with known prior $\pi (\theta )$. From~%
\eqref{eq:label_probability_main} and~\eqref{eq:label_probability_joint},
the IBR clusterer reduces to finding the following label function
probabilities under the effective RLPP: 
\begin{align}
& P(\Phi _{S}=\phi _{S}|S)\propto P(\Phi =\phi )\times  \notag \\
& \quad \int_{0}^{\infty }\int_{0}^{1}f(S_{1}|1,\rho ,\theta )f(S_{2}|2,\rho
,\theta )f(\rho )\pi (\theta )d\rho d\theta ,
\label{eq:image_label_function_integral}
\end{align}
where 
\begin{equation*}
f(S_{y}|y,\rho ,\theta )=\prod_{\mathbf{x}\in S_{y}}f(\mathbf{x}|y,\rho
,\theta ).
\end{equation*}
Since we assume a Gaussian model, $f(S_{y}|y,\rho ,\theta )$ is precisely
the likelihood function in~\eqref{eq:gaussian_model0}. To make~%
\eqref{eq:image_label_function_integral} tractable, we assume discrete
priors on $\rho $ and $\theta $ so that the integrals can be written as sums.

\subsection{Experimental Results}

The image generation model is based on the parameterized RLPP model
described above. For a given set of images under a given RLPP with parameter 
$\theta $, which determines the sizing distribution, we generate $%
n=n_{1}+n_{2}$ binary images, where $n_{1}$ and $n_{2}$ denote the fixed
number of images from class $1$ and class $2$, respectively. Each image
contains $1,000$ non-overlapping and vertically aligned grains (triangles
and rods), and is $550\times 550$ pixels. The prior $f(\rho )$ on the
proportion of triangles for class $1$ is uniform over $500$ values from $%
0.45 $ to $0.55$, and we assume the proportion of triangles for class $2$ is 
$1-\rho$. Figure~\ref{fig:image_realization_aligned} shows three example
realizations of images with gamma$(\alpha =1.95,\beta =2)$ sizing
distributions for the triangles and gamma$(\alpha =1.97,\beta =2)$ for the
rods. Parts (a), (b), and (c) contain triangle proportions $0.45$, $0.5$,
and $0.55$, respectively.

\begin{figure*}[tb!]
\centering
\subfloat[]{\includegraphics[width=0.3\textwidth]{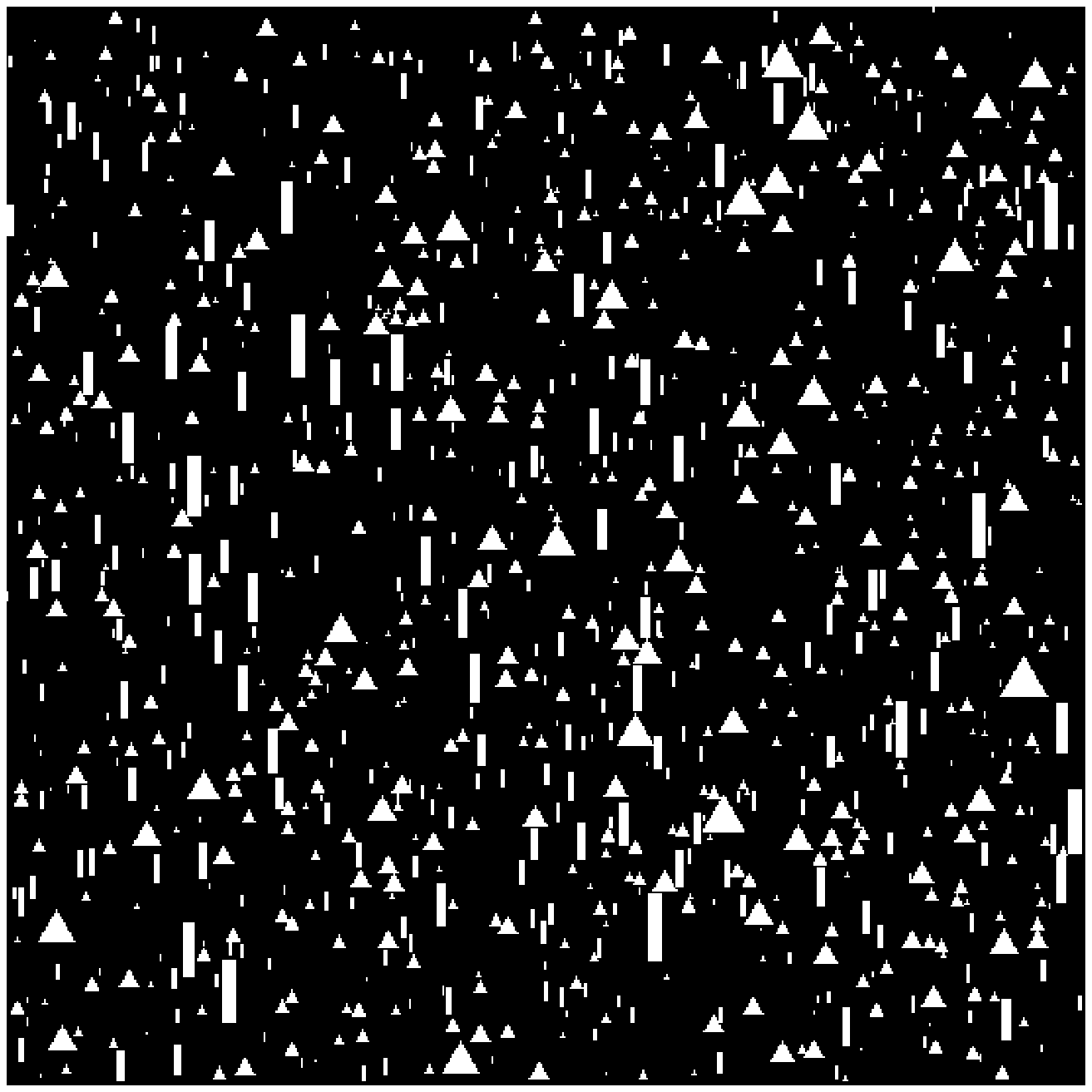}} \hfill %
\subfloat[]{\includegraphics[width=0.3\textwidth]{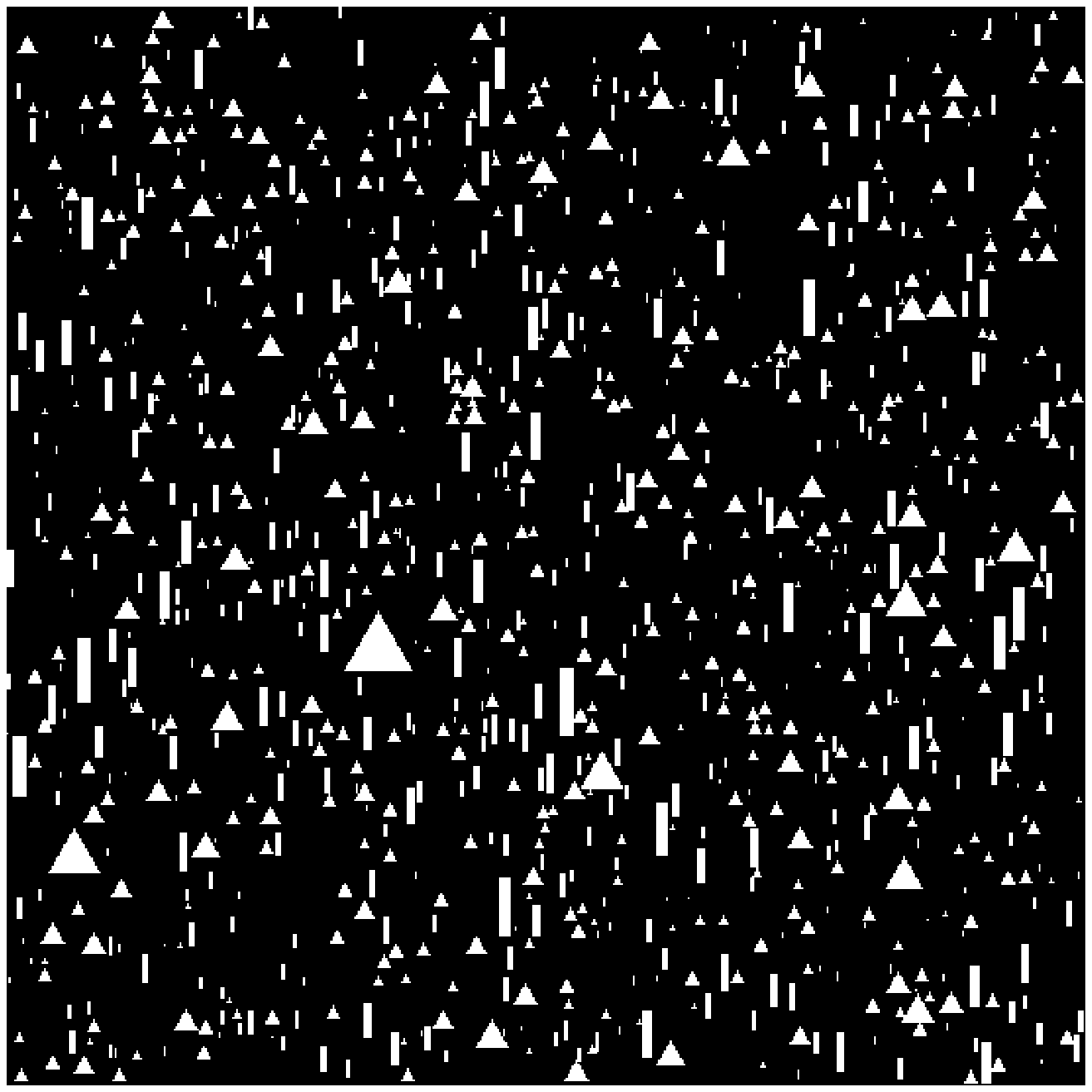}} \hfill %
\subfloat[]{\includegraphics[width=0.3\textwidth]{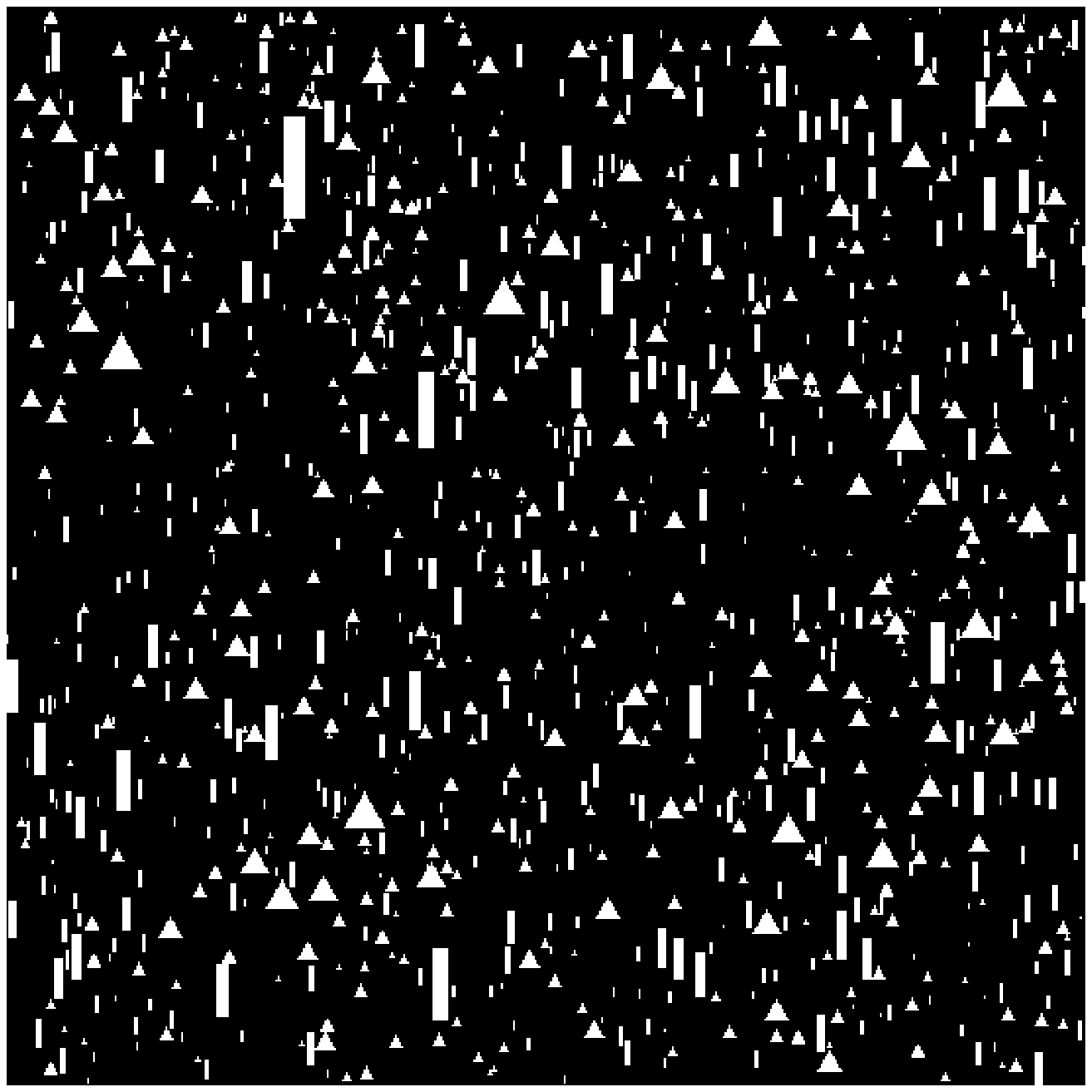}}
\caption{Examples of image realizations generated by the T-grain crystal
model. Each image contains $1,000$ grains. The sizing distribution of the
grains are gamma$(\protect\alpha = 1.95, \protect\beta = 2)$ for the
triangles and gamma$(\protect\alpha = 1.97, \protect\beta = 2)$ for the
rods. The size of each image is $550 \times 550$ pixels: (a) proportions of $%
0.45$ triangles and $0.55$ rods, (b) $0.5$ triangles and $0.5 $ rods, and
(c) $0.55$ triangles and $0.45$ rods. }
\label{fig:image_realization_aligned}
\end{figure*}

The prior $\pi (\theta )$ is uniform over $10$ values from $1.75$ to $2$. We
assume gamma$(\alpha _{iy},\beta _{y})$ sizing distributions for primitive $%
i $ under class $y$, where $\beta _{1}=\theta $, $\beta _{2}=3.75-\theta $.
For triangles, $\alpha _{1y}=1.95$ and $1.97$ for class $1$ and class $2$,
respectively, and for rods, $\alpha _{2y}=1.97$ and $1.95$ for class $1$ and
class $2$, respectively. We generate $500$ sets of images for each state,
for a total of $5,000$ sets of images. For each image, openings are found,
followed by granulometric moments $\mathbf{z}$ from the openings, and
finally a feature vector $\mathbf{x}=M^{-1}\mathbf{z}$. Figure~\ref%
{fig:scatter} provides example scatter plots of all pairs of features
extracted from $100$ images. These images correspond to $\theta =1.75$ and
the $10$ smallest values of $\rho $ (between $0.45$ and $0.452$), with $5$
images selected from each value of $\rho $ and each group.

\begin{figure*}[tb!]
\centering
\includegraphics[width=\textwidth]{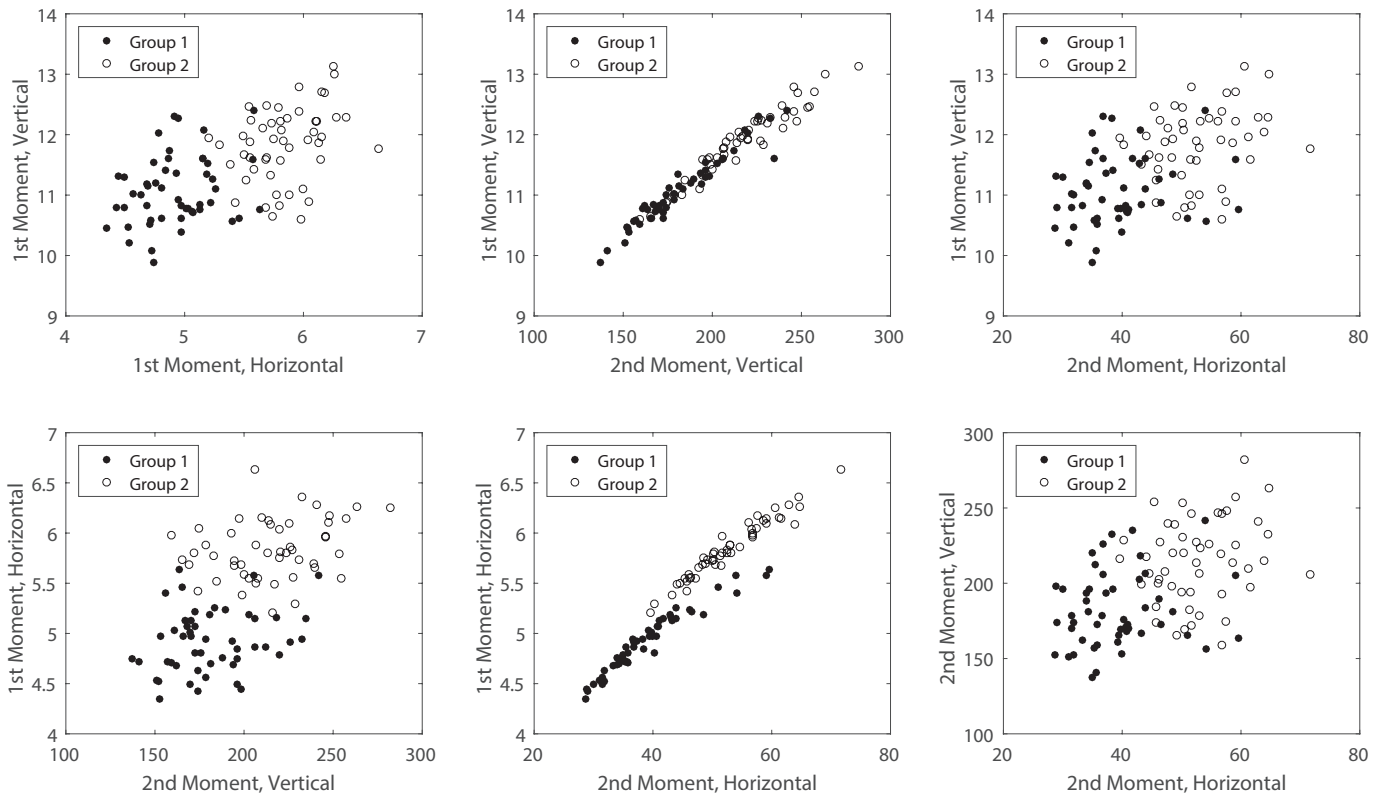}
\caption{Scatter plots of all pairs of features extracted from $100$ images.}
\label{fig:scatter}
\end{figure*}

For each set of images, 
we run FCM, KM, H-S, H-A, H-C, EM, MCMC and Random. Note EM-I and MCMC-I,
which use normal-inverse-Wishart priors on the mean and covariance pairs,
are not sensible to run here since the model uncertainty on $b_{1}$, $b_{2}$
and $\beta $ is not very compatible with this prior form. We also find the
IBR partition using the Bayes partition for the effective RLPP, which merges
uncertainty in $\theta $ and $\rho $. In particular, we compute the
partition error for all partitions of the images from~%
\eqref{eq:error_of_partition:vectors}, and choose the partition with minimal
partition error. Note that~\eqref{eq:error_of_partition:vectors} is found
using the natural cost function in~\eqref{eq:cost_function:final}, and the
posterior partition probabilities in~\eqref{eq:partition_probability}, which
is based on posterior label function probabilities that may be computed
exactly using a discretized version of~%
\eqref{eq:image_label_function_integral}. Recall $f(\mathbf{x}|y,\rho
,\theta )$ is assumed Gaussian with means given by~\eqref{eq:mu},
covariances given by~\eqref{eq:Sigma}, and appropriate values for $b_{1}$, $%
b_{2}$ and $\beta $ depending on $y$, $\rho $ and $\theta $. It is possible
to list all partitions and compute the partition errors exactly when $n=10$
and $l=2$. Again, we did not test MCBR and minimax robust clusterers owing
to their high computational cost. 

Figure~\ref{fig:performance_aligned} shows the approximate clustering error
for all algorithms with respect to $\theta$, computed using the average
cluster mismatch error over $500$ sets of images for each $\theta$. Part~(a)
shows results when $n_{1}=n_{2}=5$, and part~(b) shows results when $n_{1}=6$
and $n_{2}=4$. In both parts~(a) and~(b), the IBR clusterer performs much
better than all classical algorithms across all states. Note that the IBR
clusterer makes ``incorrect'' Gaussian modeling assumptions, but that the
Gaussianity assumption and the analytically computed mean and covariance for
each cluster become more accurate as the number of grains increases. Under
all algorithms there is a significant variation in performance, which
deteriorates when $\theta \approx 1.8750$. This corresponds to the case
where $\beta _{1}=\beta _{2}$, i.e., the case where the classes are most
similar. Among all classical algorithms, the EM algorithm is usually the
best, followed by FCM and KM, which have very similar performance. In some
cases in Figure~\ref{fig:performance_aligned}, the performance of
hierarchical clustering with single linkage is worse than Random. As seen in
Section~\ref{sec:simulation_Gaussian}, MCMC with incorrect priors and small
samples again has very poor performance. These graphs are summarized in
Table~\ref{tab:granulometry}, which shows the approximate average clustering
error for each algorithm over all states and iterations. Finally, note that
performance is similar between equal and unequal cluster size for all
algorithms.

\begin{figure*}[tb!]
\centering
\subfloat[]{\includegraphics[width=0.48\textwidth]{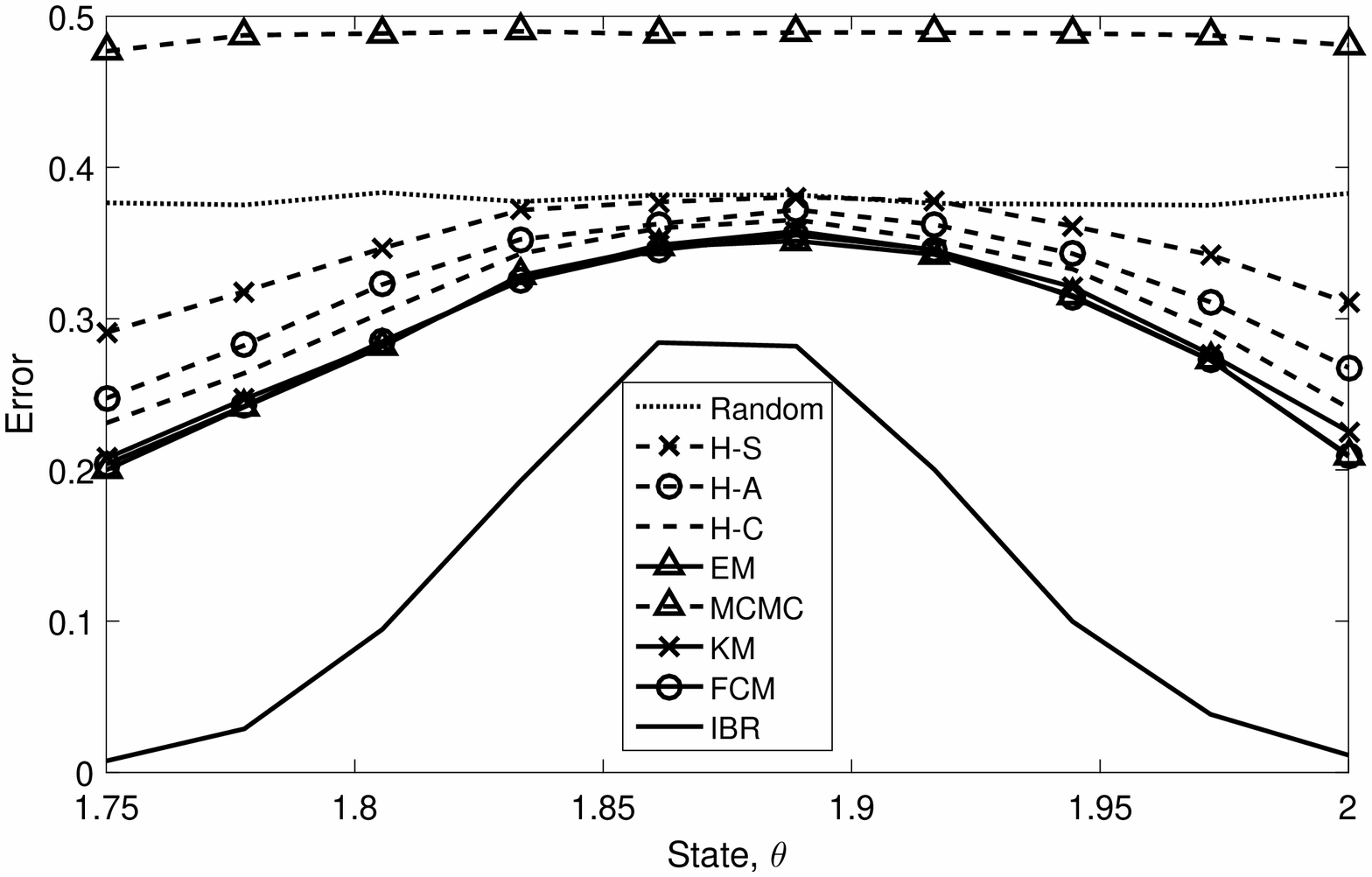}}
\hfill
\subfloat[]{\includegraphics[width=0.48\textwidth]{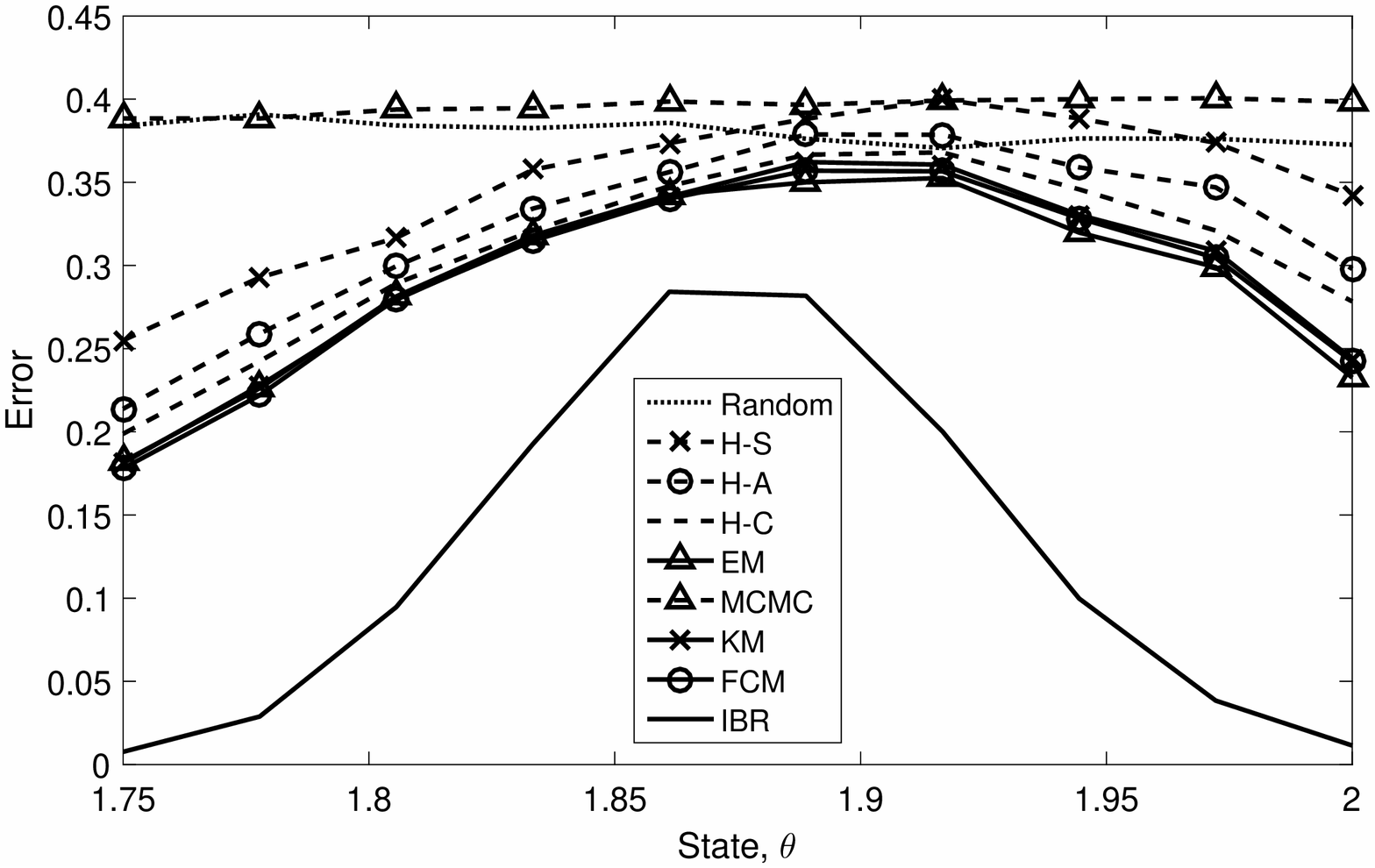}}
\caption{Average cluster mismatch error as a function of the state, $\protect%
\theta$, in the granular imaging example: (a) $n_1 = n_2 = 5$, (b) $n_1 = 6$
and $n_2 = 4$. }
\label{fig:performance_aligned}
\end{figure*}

\begin{table*}[!tb]
\caption{Average cluster mismatch error over all $5,000$ iterations in the
granular imaging example. }
\label{tab:granulometry}\centering
\begin{tabular}{|c|rrrrrrrrr|}
\hline
$n_1, n_2$ & IBR & FCM & KM & MCMC & EM & H-C & H-A & H-S & Random \\ 
\hline
5, 5 & 0.1239 & 0.2899 & 0.2938 & 0.4858 & 0.2890 & 0.3086 & 0.3223 & 0.3477
& 0.3786 \\ 
6, 4 & 0.1351 & 0.2924 & 0.2952 & 0.3956 & 0.2904 & 0.3079 & 0.3224 & 0.3488
& 0.3799 \\ 
\hline
\end{tabular}%
\end{table*}

Since our focus is on robust clustering theory rather than image processing,
in particular, the interplay between clustering optimization and the
structure of prior knowledge, we have implemented a model setting based on~%
\eqref{eq:image_model}; nevertheless, before concluding this section, we
believe a few comments concerning the effect of deviations from the model
assumptions on the asymptotic granulometric moments are warranted.

The grain model of~\eqref{eq:image_model} has been used in numerous studies
of granulometric filtering and asymptotic moment analysis. Three issues
regarding robustness of the theory to deviations from model assumptions have
been addressed in~\cite{rotation_practice}: (1) assuming a certain sizing
distribution when in fact the random set satisfies a different sizing
distribution, (2) using erroneous parameters for the sizing distribution,
and (3) prior segmentation when there is modest overlapping.

For instance, the effect of erroneous gamma$(\alpha ,\beta )$ sizing was
analytically quantified with respect to misclassification error. Perhaps
more importantly, the effect of watershed segmentation to separate
overlapping grains prior to moment analysis was quantified by establishing
lower and upper bounds on the actual $k$th granulometric moments when there
are multiple grain primitives. One could reconsider the entire clustering
analysis relative to these bounds; however, given the complexity of the
bounds, this would involve a complicated mathematical study that would lead
us far afield. The bounds are quite tight when grain overlapping is minor,
as it is with a properly prepared emulsion.

Finally, as in all asymptotic granulometric theory, grain orientation is
assumed fixed and not subject to rotation. The assumption is that each grain
can be canonically rotated so that triangles have a horizontal base and for
rods the shorter side forms the base, as assumed in the model. Robustness
relative to imperfect rotation normalization has not been studied
analytically. In fact, in digital image processing, rotation can cause
problems for triangles and rectangles when edge detection is inaccurate,
which is troublesome when there is low pixel resolution, a situation that is
much less problematic today than when the basic granulometric theory was
developed twenty years ago.

\section{Conclusion}

We have extended the theories of robust filtering and classification to
clustering and developed new theory showing that optimal Bayesian robust
clustering can be viewed as two equivalent optimization problems, one based
on a parameterized uncertainty class of RLPPs and the other on a single
effective RLPP that absorbs all parameters in the model. Thus, one can first
focus on modeling the uncertainties and then focus on finding the Bayes
clusterer (or a good approximation) for the effective model.

The proposed paradigm for robust clustering is distinct from all other
clustering methods in that it is fully model-based, can account for all
prior knowledge and sources of uncertainty, and is optimal relative to
clustering error. A key part of the paradigm involves justifying the
modeling assumptions. In cases where the modeling assumptions can be
justified, like in our granular imaging example where we developed new
theory on the asymptotic joint normality and moments of our extracted
features, we now have a very powerful theory for optimal robust clustering.
Furthermore, since the Bayes and IBR clusterers employ powerful optimization
directly with respect to clustering error (or clustering risk if used with
specialized cost functions), under small to moderate imperfections of the
assumed model they often continue to outperform many principled
optimization-based methods. For instance, although our implementations of
the EM, MCMC and IBR algorithms all assume Gaussian mixture models, EM and
MCMC do not always perform as well as IBR because: (1) they focus on
estimating the means and covariances instead of minimizing error, (2) they
are often implemented without available prior knowledge.

We conclude with a note on computational complexity. The optimal IBR
clusterer under Gaussian models is computationally expensive, which remains
an important issue. Since our objective here has been to develop a theory of
robust clustering, we have focused on clustering a small number of points
and implemented optimal algorithms whenever possible. That being said,
suboptimal methods inspired by the optimal equations for the Bayes clusterer
under Gaussian models (e.g., Suboptimal Pseed Fast) have been presented in~%
\cite{dalton2015analytic}. These have nearly optimal performance and
competitive computation time with point sets of size up to $10,000$. We aim
to continue studying fast suboptimal algorithms for the Bayes clusterer in
future work, which by Theorems~\ref{theorem:evaluating_IBR} and~\ref%
{theorem:effective_RLPP} automatically extend to algorithms for robust
clustering.

\section*{Appendix}

Here, we justify modeling assumptions like normality used by the IBR
clusterer in the granular imaging example. The following theorems,
originally proved in~\cite{Sand:98} and~\cite{Sivakumar:01}, provide exact
expressions for granulometric moments as a function of the grain radii. They
state that any finite length vector of granulometric moments from a single
structuring element is asymptotically normal, and provide analytic
expressions for the asymptotic mean and variance of moments. The covariance
of moments is available in~\cite{Sivakumar:01}.

\begin{thm}
Let $I$ be modeled as in~\eqref{eq:image_model}. For the granulometry $\{I
\circ tB\}$ generated by a convex, compact structuring element $B$, and for $%
k \geq 1$, 
\begin{equation}
\mu^{(k)}(I, B) = \frac{u}{v} \equiv H(u, v),
\label{eq:granulometric_moment}
\end{equation}
where 
\begin{align}
u &= \frac{1}{N} \sum_{i=1}^{m} \sum_{j=1}^{N_i} \mu^{(k)}(A_i, B) \nu[A_i]
r_{ij}^{k+2},  \label{eq:granulometric_moment_u}
\end{align}
\begin{align}
v &= \frac{1}{N} \sum_{i=1}^{m} \sum_{j=1}^{N_i} \nu[A_i] r_{ij}^{2},
\label{eq:granulometric_moment_v}
\end{align}
$\nu[A_i]$ is the volume of $A_i$, and $\mu^{(k)}(A_i, B)$ is the $k$th
moment of $A_i$ under structuring element $B$. Moreover, suppose:
\begin{enumerate}
\item The proportions $b_i=N_i/N$ are known and fixed.

\item The $r_{ij}$ are independent, $r_{i1}, \ldots, r_{i N_i}$ are
identically distributed, and every $r_{ij}$ has finite moments up to at
least order $k + 2$.

\item There exist $c$ and $t > 0$ such that $H \leq c N^t$ for $N > 1$.

\item $H$ has first and second derivatives, with bounded second derivatives
in a neighborhood of $(E[u], E[v])$.
\end{enumerate}
Then the distribution of $H$ is asymptotically normal as $N \to \infty$ with
mean and variance given by 
\begin{align}
E[H] &= H(E[u], E[v]) + O(N^{-1})  \label{eq:H_mean}
\end{align}
and 
\begin{align}
&Var[H] = (\textstyle \frac{\partial H}{\partial u} (E[u], E[v]))^2 Var[u] 
\notag \\
&\quad + 2 \textstyle \frac{\partial H}{\partial u} (E[u], E[v]) \textstyle 
\frac{\partial H}{\partial v} (E[u], E[v]) Cov[u, v]  \notag \\
&\quad + (\textstyle \frac{\partial H}{\partial v} (E[u], E[v]))^2 Var[v] +
O(N^{-3/2}).  \label{eq:H_var}
\end{align}
\label{thm:Sand98}
\end{thm}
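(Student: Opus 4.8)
The plan is to prove Theorem~\ref{thm:Sand98} in three stages: (i) establish the exact identity $\mu^{(k)}(I,B)=u/v$; (ii) show that the vector $(u,v)$ is jointly asymptotically Gaussian; and (iii) transfer normality, and the mean and variance expansions, to $H(u,v)=u/v$ by the multivariate delta method. For (i), I would use three elementary properties of morphological openings and pattern spectra. First, since the grains $r_{ij}A_i+x_{ij}$ in~\eqref{eq:image_model} are pairwise disjoint compact sets, any translate of $tB$ contained in $I$ lies in a single grain, so $I\circ tB=\bigcup_{i,j}(r_{ij}A_i+x_{ij})\circ tB$ and the size distribution $\Omega_I$ is the sum of the grain size distributions. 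Second, $(rA+x)\circ tB=r\big(A\circ (t/r)B\big)+x$, so the normalized size distribution of $r_{ij}A_i+x_{ij}$ is the pattern spectrum of $A_i$ scaled in $t$ by the factor $r_{ij}$; hence its $k$th moment is $r_{ij}^{k}\,\mu^{(k)}(A_i,B)$ and its area is $r_{ij}^{2}\,\nu[A_i]$. Third, $\Phi_I$ is the area-weighted convex combination of the grain pattern spectra, so $\mu^{(k)}(I,B)=\sum_{i,j}w_{ij}\,r_{ij}^{k}\,\mu^{(k)}(A_i,B)$ with $w_{ij}=r_{ij}^{2}\nu[A_i]\big/\sum_{i',j'}r_{i'j'}^{2}\nu[A_{i'}]$; clearing the common denominator and dividing numerator and denominator by $N$ yields exactly $u/v$ with $u,v$ as in~\eqref{eq:granulometric_moment_u}--\eqref{eq:granulometric_moment_v}.

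For (ii), write
\[
\begin{bmatrix} u \\ v \end{bmatrix}=\frac{1}{N}\sum_{i=1}^{m}\sum_{j=1}^{N_i}\xi_{ij},\qquad \xi_{ij}=\begin{bmatrix}\mu^{(k)}(A_i,B)\,\nu[A_i]\,r_{ij}^{k+2}\\ \nu[A_i]\,r_{ij}^{2}\end{bmatrix},
\]
where the $\xi_{ij}$ are independent across $i$ and i.i.d.\ in $j$ within each $i$. Because the proportions $b_i=N_i/N$ are fixed, $E[u]$ and $E[v]$ do not depend on $N$ and $E[v]>0$. Applying the multivariate central limit theorem to each of the $m$ independent blocks (valid once the $r_{ij}$ have enough finite moments to make $\mathrm{Cov}(\xi_{i1})$ finite) and summing, $\sqrt{N}\big((u,v)-(E[u],E[v])\big)\Rightarrow \mathcal N(0,\Sigma)$ with $\Sigma=\sum_{i=1}^{m}b_i\,\mathrm{Cov}(\xi_{i1})$; in particular $\mathrm{Var}[u]$, $\mathrm{Cov}[u,v]$ and $\mathrm{Var}[v]$ are the entries of $\Sigma/N$ and hence $O(N^{-1})$.

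For (iii), expand $H(u,v)=u/v$ about $\theta=(E[u],E[v])$ with a Lagrange remainder, $H(W)=H(\theta)+\nabla H(\theta)^{T}(W-\theta)+\tfrac12(W-\theta)^{T}\,\mathrm{Hess}\,H(\tilde\theta)\,(W-\theta)$, where $W=(u,v)$ and $\tilde\theta$ lies on $[\theta,W]$. Multiplying by $\sqrt N$ and noting the remainder is $O_p(N^{-1/2})$ gives $\sqrt N\big(H-H(\theta)\big)\Rightarrow\mathcal N\big(0,\nabla H(\theta)^{T}\Sigma\,\nabla H(\theta)\big)$, i.e.\ asymptotic normality with variance matching the leading terms of~\eqref{eq:H_var}; carrying the expansion one order further yields the stated $O(N^{-3/2})$ correction. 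For the mean, taking expectations annihilates the linear term, leaving $E[H]=H(\theta)+\tfrac12\,E\big[(W-\theta)^{T}\,\mathrm{Hess}\,H(\tilde\theta)\,(W-\theta)\big]$. On a fixed neighborhood of $\theta$ the Hessian of $H$ is bounded (hypothesis~4, using $E[v]>0$), so that part of the expectation is $O(E\|W-\theta\|^{2})=O(N^{-1})$; on the complement I would invoke the deterministic bound $H\le cN^{t}$ (hypothesis~3) together with a higher-order Markov (Rosenthal-type) tail estimate to force $P(\|W-\theta\|>\delta)$ below $N^{-(t+1)}$, making that contribution $o(N^{-1})$ — which, together with an exact evaluation of the $O(N^{-1})$ term, gives~\eqref{eq:H_mean}.

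I expect the main obstacle to be exactly this remainder control: establishing that the contribution to $E[H]$, and to the variance expansion, from the rare event that $(u,v)$ strays far from $(E[u],E[v])$ is genuinely negligible at the claimed polynomial rates. This is where all four hypotheses are simultaneously needed — fixed proportions and sufficient moments for the block CLT and the fast tail decay, the growth bound to cap $H$ on the bad event, and the bounded Hessian on the good event — whereas the algebraic identity in stage (i) and the block CLT in stage (ii) are routine by comparison.
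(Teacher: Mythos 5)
The paper does not actually prove Theorem~\ref{thm:Sand98}: it is imported verbatim from~\cite{Sand:98} (with the joint-normality extension in~\cite{Sivakumar:01}), so there is no in-paper proof to compare against. That said, your outline reconstructs essentially the argument of those references. Stage~(i) is right and is the crux of the exact identity: because $B$ is convex (hence connected) and the grains in~\eqref{eq:image_model} are pairwise disjoint compact sets, every translate $tB+x\subset I$ sits inside a single grain, so the opening distributes over the grains; the scaling law $(rA+x)\circ tB = r\bigl(A\circ (t/r)B\bigr)+x$ then gives per-grain pattern spectra with $k$th moment $r_{ij}^{k}\mu^{(k)}(A_i,B)$ and weight $r_{ij}^{2}\nu[A_i]$, and the area-weighted mixture collapses to $u/v$. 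Stages~(ii) and~(iii) — blockwise multivariate CLT for $(u,v)$ followed by the delta method with Cram\'er-type control of the remainder, using hypothesis~3 to cap $H$ on the rare event that $(u,v)$ strays from $(E[u],E[v])$ and hypothesis~4 on the complement — are exactly the mechanism behind~\eqref{eq:H_mean} and~\eqref{eq:H_var}. One point you mention only parenthetically deserves emphasis: for $\mathrm{Var}[u]$ in~\eqref{eq:H_var} to be finite you need $E[r_{ij}^{2(k+2)}]<\infty$, and your Rosenthal-type tail estimate needs still higher radius moments to push $P(\|W-\theta\|>\delta)$ below $N^{-(t+1)}$; hypothesis~2 as stated (``finite moments up to at least order $k+2$'') must therefore be read as sufficiently high order for the argument to close. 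That is a defect of the theorem statement as transcribed rather than of your proof, but your write-up should make the required moment order explicit rather than leaving it implicit in the phrase ``enough finite moments.''
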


\begin{thm}
Under the conditions of Theorem~\ref{thm:Sand98}, any finite set of
granulometric moments is asymptotically jointly normal. \label%
{thm:Sivakumar:01}
\end{thm}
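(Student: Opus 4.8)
The plan is to reduce this multivariate statement to the univariate Theorem~\ref{thm:Sand98} by combining the Cram\'er--Wold device with the multivariate delta method. Enumerate the moments in the finite set as $\mu^{(k)}(I,B_\ell)$ with $k$ ranging over a finite set $\mathcal{Q}$ and $\ell$ over a finite set $\mathcal{L}$, and recall from \eqref{eq:granulometric_moment}--\eqref{eq:granulometric_moment_v} that each equals $u^{(k)}_\ell/v$, where the denominator $v=\frac1N\sum_{i=1}^m\sum_{j=1}^{N_i}\nu[A_i]r_{ij}^2$ is \emph{common to all} of them, and each numerator $u^{(k)}_\ell=\frac1N\sum_{i=1}^m\sum_{j=1}^{N_i}\mu^{(k)}(A_i,B_\ell)\nu[A_i]r_{ij}^{k+2}$ is a fixed linear combination (coefficients $\mu^{(k)}(A_i,B_\ell)$) of the per-type power sums $P^{(i)}_s=\frac1N\sum_{j=1}^{N_i}\nu[A_i]r_{ij}^{\,s}$ for $s=2,3,\ldots,\max\mathcal{Q}+2$. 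Stack all the numerators together with $v$ into one random vector $W_N$; every granulometric moment in the collection is then $H_{k\ell}(W_N)$ for the smooth rational map $(\mathbf{a},b)\mapsto a_{k\ell}/b$.

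First I would show that $\sqrt{N}\,(W_N-E[W_N])$ converges in distribution to a centered multivariate Gaussian $\mathcal{N}(0,\Sigma_W)$. Since $N=N_1+\cdots+N_m$ with the proportions $b_i=N_i/N$ fixed (condition~1) and, within primitive type $i$, the radii $r_{i1},\ldots,r_{iN_i}$ are i.i.d.\ while distinct types are independent (condition~2), the vector $W_N-E[W_N]$ is a superposition of $m$ independent blocks, each an average of i.i.d.\ mean-zero vectors $(\nu[A_i]r_{ij}^{\,s})_s$ with $N_i\to\infty$. By Cram\'er--Wold it suffices to prove every fixed linear functional $\langle c,W_N\rangle$ is asymptotically univariate normal; within each block this is the CLT for triangular arrays, whose Lindeberg/Lyapunov condition follows from the moment hypotheses on the $r_{ij}$ (this is precisely where finiteness of grain-size moments of sufficiently high order is consumed, so that the summands $r_{ij}^{k+2}$ have finite variance). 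Adding the independent Gaussian limits of the blocks gives $\Sigma_W=\sum_{i=1}^m b_i\,\mathrm{Cov}_i$, the per-type covariances being read off from the joint moments of $(r_{ij}^2,r_{ij}^3,\ldots)$.

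Next I would apply the multivariate delta method. The denominator satisfies $E[v]\to\sum_i b_i\nu[A_i]E[r_i^2]>0$ because the $\nu[A_i]$ and $E[r_i^2]$ are strictly positive, so the map $H=(H_{k\ell})$ is $C^1$ — indeed with bounded second derivatives (condition~4, applied coordinatewise) — on a neighborhood of $E[W_N]$. Hence $\big(\mu^{(k)}(I,B_\ell)\big)_{k,\ell}=H(W_N)$ is asymptotically jointly normal with mean $H(E[W_N])+O(N^{-1})$, reproducing \eqref{eq:H_mean} in each coordinate, and covariance $J\,\Sigma_W\,J^{T}$ with $J=\nabla H(E[W_N])$; its diagonal blocks collapse to \eqref{eq:H_var}, while its off-diagonal blocks supply the cross-covariances between moments of different orders and different structuring elements, which is the content added beyond Theorem~\ref{thm:Sand98}.

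The main obstacle is the joint CLT in the second step: one must handle a triangular array that is a superposition of $m$ growing, mutually independent i.i.d.\ families, and verify the Lindeberg condition for linear combinations of the high power-sums $\frac1N\sum r_{ij}^{k+2}$ — where the precise moment assumption on the sizing distribution does the real work, as in \cite{Sivakumar:01}. Once joint asymptotic normality of $W_N$ is established, the passage to the moments themselves is the same delta-method argument already used in the scalar case, and the theorem follows.
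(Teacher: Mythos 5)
Your argument is correct; the paper states this theorem without proof (attributing it to~\cite{Sivakumar:01}), but the route you take --- stacking the per-type power sums and the common denominator $v$ into one vector, establishing its joint asymptotic normality from the independence across primitive types and the i.i.d.\ structure within each type, and then pushing that through the multivariate delta method applied to the ratio maps --- is exactly the technique the paper itself uses to prove Lemma~\ref{lem:granulometry}, which extends the result to multiple structuring elements, so your proof in fact covers that stronger case as well. The one caveat worth recording is that the finite-variance requirement on the summands $r_{ij}^{k+2}$ that your CLT step consumes amounts to finite moments of order $2(k+2)$, slightly more than the order $k+2$ literally stated in condition~2 of Theorem~\ref{thm:Sand98} --- a looseness already implicit in the variance formula~\eqref{eq:H_var} --- and you rightly flag this as the place where the moment hypotheses on the sizing distribution do the real work.
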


Theorems~\ref{thm:Sand98} and~\ref{thm:Sivakumar:01} are not sufficient to
guarantee the asymptotic joint normality of $\mathbf{x}$ or $\mathbf{z}$, or
to obtain their asymptotic moments, because these vectors contain moments
from multiple structuring elements. 
Thus, here we present a new lemma proving asymptotic joint normality and
providing analytic expressions for the asymptotic mean and covariance of
granulometric moments under multiple primitives and multiple structuring
elements. It can be shown that these moments are consistent with~%
\eqref{eq:H_mean} and~\eqref{eq:H_var}.

\begin{lem}
Let $I$ be modeled as in~\eqref{eq:image_model} with $m = 2$ primitives, let 
$b_1 = N_1/N$ and $b_2 = N_2/N$ be known and fixed, and let the radii $%
r_{ij} $ be independent such that $r_{i1}, \ldots, r_{i N_i}$ are
identically distributed and $E[r_{ij}^k] = \gamma_{ik} \beta^k$ for $k = 2,
3, 4$. Let $\mathbf{x} = M^{-1} \mathbf{z}$ be a vector of linearly
transformed first and second order granulometric moments under arbitrary
structuring elements, $B_1$ and $B_2$. 
%
Then $\mathbf{x}$ is asymptotically jointly normal with mean vector 
\begin{equation}
\frac{1}{b_1 \gamma_{12} + b_2 \gamma_{22}} 
\begin{bmatrix}
b_1 \gamma_{13} \beta & b_2 \gamma_{23} \beta & b_1 \gamma_{14} \beta^2 & 
b_2 \gamma_{24} \beta^2%
\end{bmatrix}%
^T  \label{eq:x_mean}
\end{equation}
and covariance matrix 
\begin{equation}
\frac{1}{N(b_1 \gamma_{12} + b_2 \gamma_{22})^4} 
\begin{bmatrix}
A_{11} \beta^2 & A_{12} \beta^3 \\ 
A_{21} \beta^3 & A_{22} \beta^4%
\end{bmatrix}%
,  \label{eq:x_cov}
\end{equation}
where 
\begin{align*}
A_{ij} &= 
\begin{bmatrix}
(b_1)^2 (b_1 B_{ij1}^{11} + b_2 B_{ij1}^{12}) & b_1 b_2 (b_1 B_{ij1}^{21} +
b_2 B_{ij1}^{22}) \\ 
b_1 b_2 (b_1 B_{ij2}^{11} + b_2 B_{ij2}^{12}) & (b_2)^2 (b_1 B_{ij2}^{21} +
b_2 B_{ij2}^{22})%
\end{bmatrix}
\notag \\
& + 
\begin{bmatrix}
b_1 (b_1 \gamma_{12} + b_2 \gamma_{22})^2 C_{ij1} & 0 \\ 
0 & b_2 (b_1 \gamma_{12} + b_2 \gamma_{22})^2 C_{ij2}%
\end{bmatrix}%
, \\
B_{ijk}^{lp} &= C_{00p}\gamma_{k(i+2)}\gamma_{l(j+2)} \\
&\quad - \gamma_{p2}\gamma_{k(i+2)}C_{0jl} -
\gamma_{p2}C_{i0k}\gamma_{l(j+2)}, \\
C_{ijk} &= \gamma_{k(i+j+4)} - \gamma_{k(i+2)}\gamma_{k(j+2)}.
\end{align*}
\label{lem:granulometry}
\end{lem}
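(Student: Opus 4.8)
The plan is to exploit the structural fact---visible already in~\eqref{eq:x}---that the transformed moment vector $\mathbf{x}=M^{-1}\mathbf{z}$ depends on the image realization only through the grain radii, the structuring-element dependence having been absorbed entirely into the matrix $M$. Write $w_{i,p}:=\frac1{N_i}\sum_{j=1}^{N_i}r_{ij}^{p}$ for the empirical $p$th power-mean of the radii of primitive $i$. Dividing the numerator and denominator of~\eqref{eq:x} by $N$ gives
\begin{equation*}
x_{ik}=\frac{b_i\,w_{i,k+2}}{b_1 w_{1,2}+b_2 w_{2,2}},\qquad i,k\in\{1,2\},
\end{equation*}
so $\mathbf{x}=g(\mathbf{W})$ for the explicit rational map $g$, where $\mathbf{W}=(w_{1,2},w_{1,3},w_{1,4},w_{2,2},w_{2,3},w_{2,4})$. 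Consequently Theorems~\ref{thm:Sand98} and~\ref{thm:Sivakumar:01} are not needed: the lemma reduces to a central-limit-plus-delta-method argument for $\mathbf{W}$.

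First I would establish the joint asymptotic normality of $\mathbf{W}$. For each fixed $i$, the triple $(w_{i,2},w_{i,3},w_{i,4})$ is the sample mean of the i.i.d.\ random vectors $(r_{ij}^{2},r_{ij}^{3},r_{ij}^{4})$, $j=1,\dots,N_i$, so the multivariate CLT gives $\sqrt{N_i}\,\big((w_{i,2},w_{i,3},w_{i,4})-(\gamma_{i2}\beta^{2},\gamma_{i3}\beta^{3},\gamma_{i4}\beta^{4})\big)\Rightarrow\mathcal{N}(\mathbf{0},\Gamma_i)$, where $(\Gamma_i)_{p,q}=\mathrm{Cov}(r_{ij}^{p},r_{ij}^{q})=(\gamma_{i,p+q}-\gamma_{ip}\gamma_{iq})\beta^{p+q}=\beta^{p+q}\,C_{(p-2)(q-2)i}$ for $p,q\in\{2,3,4\}$ (finiteness of the radius moments through order $8$ being implicit, as holds when the radii are gamma-distributed). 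Since $N_i=b_iN$ with $b_i\in(0,1)$ fixed, this holds as $N\to\infty$, and since the radii of the two primitives are independent the two blocks are independent; hence $\sqrt{N}(\mathbf{W}-\mathbf{m})\Rightarrow\mathcal{N}(\mathbf{0},\Sigma_W)$ with $\Sigma_W=\mathrm{diag}(\Gamma_1/b_1,\Gamma_2/b_2)$ and $\mathbf{m}=(\gamma_{12}\beta^{2},\gamma_{13}\beta^{3},\gamma_{14}\beta^{4},\gamma_{22}\beta^{2},\gamma_{23}\beta^{3},\gamma_{24}\beta^{4})$, and $\mathbf{W}\to\mathbf{m}$ in probability.

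Next I would apply the multivariate delta method. The denominator $b_1 w_{1,2}+b_2 w_{2,2}$ converges in probability to $D_\infty:=(b_1\gamma_{12}+b_2\gamma_{22})\beta^{2}>0$, so $g$ is continuously differentiable in a neighborhood of $\mathbf{m}$; therefore $\sqrt{N}\,(\mathbf{x}-g(\mathbf{m}))\Rightarrow\mathcal{N}(\mathbf{0},\,J\Sigma_W J^{T})$ with $J=Dg(\mathbf{m})$, which is the claimed asymptotic joint normality. Since the $(i,k)$ component of $g(\mathbf{m})$ is $b_i\gamma_{i,k+2}\beta^{k+2}/D_\infty=b_i\gamma_{i,k+2}\beta^{k}/(b_1\gamma_{12}+b_2\gamma_{22})$, the mean vector is exactly~\eqref{eq:x_mean}; the $O(N^{-1})$ bias in $E[\mathbf{x}]$ (the analogue of the error term in~\eqref{eq:H_mean}) is asymptotically negligible and does not affect this first-order mean.

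The remaining and principal task is to check that $N^{-1}J\Sigma_W J^{T}$ equals~\eqref{eq:x_cov}. Evaluated at $\mathbf{m}$, the nonzero partial derivatives of $x_{ik}=b_i w_{i,k+2}/(b_1 w_{1,2}+b_2 w_{2,2})$ are $\partial x_{ik}/\partial w_{i,k+2}=b_i/D_\infty$ and $\partial x_{ik}/\partial w_{i',2}=-b_i b_{i'}\gamma_{i,k+2}\beta^{k+2}/D_\infty^{2}$ for $i'=1,2$. Substituting these into $J\Sigma_W J^{T}$ splits the covariance of $\mathbf{x}$ into a numerator--numerator part, a denominator--denominator part, and a numerator--denominator cross part; since the numerators of $x_{ik}$ and $x_{i'k'}$ are independent when $i\neq i'$ and only correlated (through a covariance of radius powers) when $i=i'$, $k\neq k'$, while their denominators always coincide, tracking these three regimes yields the $2\times2$ block structure of~\eqref{eq:x_cov}. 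Factoring out $1/\big(N(b_1\gamma_{12}+b_2\gamma_{22})^{4}\big)$ and collecting the powers of $\beta$ reproduces the prefactor and the $\beta^{2},\beta^{3},\beta^{4}$ weights on the blocks; rewriting every residual radius-moment combination through $C_{ijk}=\gamma_{k(i+j+4)}-\gamma_{k(i+2)}\gamma_{k(j+2)}$---which are precisely the normalized entries of the $\Gamma_i$ identified above---and reorganizing the mixed terms into the bilinear forms $B_{ijk}^{lp}$ reproduces the stated closed form for $A_{ij}$, entry by entry, with $A_{21}=A_{12}^{T}$ by symmetry. I expect this final bookkeeping---getting the $b_i$ weights and $\beta$ exponents right in the cross terms and confirming the off-diagonal blocks---to be the only genuine obstacle; everything else is routine.
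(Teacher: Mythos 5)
Your proposal is correct and follows essentially the same route as the paper's proof: a multivariate CLT for the vector of sample power sums $(r_{ij}^2,r_{ij}^3,r_{ij}^4)$ of each primitive (independent across primitives), followed by the multivariate delta method applied to the rational map taking these averages to $\mathbf{x}$, with the mean and covariance read off from $g(\mathbf{m})$ and $J\Sigma_W J^T$. The only difference is cosmetic---you keep the unweighted sample means and put the $b_i$ factors into $g$, whereas the paper absorbs them into the random vector $\mathbf{w}=[b_1\mathbf{r}_1^T\;b_2\mathbf{r}_2^T]^T$---and, like the paper, you leave the final entry-by-entry identification of the $A_{ij}$ blocks as algebraic bookkeeping.
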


\begin{proof}
Note $\mathbf{x} = 
\begin{bmatrix}
	x_{11} & x_{21} & x_{12} & x_{22}
\end{bmatrix}^{T}$ has components $x_{ik}$ given by~\eqref{eq:x}.  
Without loss of generality, assume each primitive has area $\nu[A_1] = \nu[A_2] = 1$. 
Fix $i$, and for each $j = 1, \ldots, N_i$ let $\mathbf{r}_{ij} = 
\begin{bmatrix}
r_{ij}^2 & r_{ij}^3 & r_{ij}^3
\end{bmatrix}
^T$. $\mathbf{r}_{ij}$ is thus a sequence of independent and identically distributed random vectors, and let
us denote the common mean vector by $\mu_i = 
\begin{bmatrix}
\mu_{i2} & \mu_{i3} & \mu_{i4}
\end{bmatrix}
^T$ and covariance matrix by $\Sigma_i$, which are assumed to exist. Then by
the central limit theorem, 
\begin{equation*}
\sqrt{N_i} \left(\mathbf{r}_i - \mu_i\right) \longrightarrow \mathcal{N}(\mathbf{0}, \Sigma_i),
\end{equation*}
where $\mathbf{r}_i$ is the sample mean of the $\mathbf{r}_{ij}$ over all $j$, 
$\mathcal{N}$ denotes a multivariate Gaussian distribution, and ``$\longrightarrow$'' denotes convergence in distribution. Since the sets of radii $\{r_{1j}\}$ and $\{r_{2j}\}$ are independent, 
$\mathbf{r}_1$ and $\mathbf{r}_2$ are independent. Hence, as long as $N_1$
and $N_2$ go to infinity as $N$ goes to infinity, 
we have for $\mathbf{w} = 
\begin{bmatrix}
b_1 \mathbf{r}_1^T & b_2 \mathbf{r}_2^T
\end{bmatrix}
^T$ and $\mu = 
\begin{bmatrix}
b_1 \mu_1^T & b_2 \mu_2^T
\end{bmatrix}^T$, 
\begin{equation*}
\sqrt{N} (\mathbf{w} - \mu) \longrightarrow \mathcal{N}\left(\mathbf{0}, 
\begin{bmatrix}
b_1 \Sigma_1 & \mathbf{0} \\
\mathbf{0} & b_2 \Sigma_2
\end{bmatrix}
\right).
\end{equation*}
Define $g:\mathbb{R}^6 \to \mathbb{R}^4$, where 
\begin{equation*}
g(\mathbf{w}) = \frac{1}{w_1 + w_4} 
\begin{bmatrix}
w_2 & w_5 & w_3 & w_6
\end{bmatrix}^T
\end{equation*}
for $\mathbf{w} = 
\begin{bmatrix}
w_1 & w_2 & w_3 & w_4 & w_5 & w_6
\end{bmatrix}^T$. $g$ is differentiable whenever $w_1 + w_4 \neq 0$, and 
\begin{align*}
&\frac{\partial g}{\partial \mathbf{w}} = \frac{1}{(w_1 + w_4)^2} \times \\
& 
\begin{bmatrix}
-w_2 & w_1 + w_4 & 0 & -w_2 & 0 & 0 \\
-w_5 & 0 & 0 & -w_5 & w_1 + w_4 & 0 \\
-w_3 & 0 & w_1 + w_4 & -w_3 & 0 & 0 \\
-w_6 & 0 & 0 & -w_6 & 0 & w_1 + w_4
\end{bmatrix}.
\end{align*}
Applying the \emph{Multivariate Delta Method}~\cite{cramer1947mathematical},
and noting that $\mu_{12}$ and $\mu_{22}$ will not both be zero for any
reasonable sizing distribution, 
\begin{align*}
&\sqrt{N} (g(\mathbf{w}) - g(\mu)) \longrightarrow \\
&\qquad \mathcal{N}\left( \mathbf{0}, \left(\frac{\partial g}{\partial 
\mathbf{w}} (\mu) \right) 
\begin{bmatrix}
b_1 \Sigma_1 & \mathbf{0} \\
\mathbf{0} & b_2 \Sigma_2
\end{bmatrix}
\left(\frac{\partial g}{\partial \mathbf{w}} (\mu) \right)^T \right).
\end{align*}
Note that $\mathbf{x} = g(\mathbf{w})$,
and that 
\begin{equation*}
\frac{\partial g}{\partial \mathbf{w}} (\mu) = \frac{1}{(b_1 \mu_{12} + b_2
\mu_{22})^2} 
\begin{bmatrix}
G_{11} & G_{12} \\
G_{21} & G_{22} \\
\end{bmatrix},
\end{equation*}
where 
\begin{align*}
G_{11} &= 
\begin{bmatrix}
-b_1 \mu_{13} & b_1\mu_{12} + b_2 \mu_{22} & 0 \\
-b_2 \mu_{23} & 0 & 0
\end{bmatrix}, \\
G_{12} &= 
\begin{bmatrix}
-b_1 \mu_{13} & 0 & 0 \\
-b_2 \mu_{23} & b_1\mu_{12} + b_2 \mu_{22} & 0
\end{bmatrix}, \\
G_{21} &= 
\begin{bmatrix}
-b_1 \mu_{14} & 0 & b_1\mu_{12} + b_2 \mu_{22} \\
-b_2 \mu_{24} & 0 & 0
\end{bmatrix}, \\
G_{22} &= 
\begin{bmatrix}
-b_1 \mu_{14} & 0 & 0 \\
-b_2 \mu_{24} & 0 & b_1\mu_{12} + b_2 \mu_{22}
\end{bmatrix}.
\end{align*}
Assuming that $\mu_{ik} = \gamma_{ik} \beta^k$, $\mathbf{x}$ is thus
asymptotically Gaussian with mean vector given by~\eqref{eq:x_mean} and covariance matrix given by~\eqref{eq:x_cov}.
Note $A_{ij}$ is a $2\times2$ matrix and $B_{ijk}^{lp}$ and $C_{ijk}$ are
constants. 
\end{proof}

\section*{Acknowledgment}

The work of LAD is supported by the National Science Foundation (CCF-1422631
and CCF-1453563).


\bibliographystyle{IEEEtran}



\end{document}